\newtheorem{lemma}{Lemma}
\newtheorem{corollary}{Corollary}
\newtheorem{theorem}{Theorem}
\def\be {\begin{equation}}
\def\ee {\end{equation}}
\def\bea {\begin{eqnarray*}}
\def\eea {\end{eqnarray*}}
\newcommand{\x}{{\mathbf x}}
\newcommand{\w}{{\mathbf w}}
\newcommand{\bF}{\bar{F}}
\renewcommand{\v}{{\mathbf v}}
\renewcommand{\u}{{\mathbf u}}
\newcommand{\e}{{\mathbf e}}
\newcommand{\sgn}{{\mathrm{sgn}}}
\newcommand{\trans}{\dagger}
\newcommand{\supp}{{\mathrm{supp}}}
\newcommand{\BlackBox}{\rule{1.5ex}{1.5ex}}
\newenvironment{proof}{\par\noindent{\bf Proof\ }}{\hfill\BlackBox\\[2mm]}
\newcommand{\reals}{\mathbb{R}}
\newcommand{\Y}{\mathcal{Y}}
\newcommand{\X}{\mathcal{X}}
\newcommand{\E}{\mathbb{E}}
\newcommand{\D}{\mathcal{D}}
\newcommand{\V}{\mathcal{V}}
\newcommand{\inner}[1]{\langle #1 \rangle}
\newcommand{\eqdef}{\stackrel{\mathrm{def}}{=}}
\newcommand{\poly}{\mathrm{poly}}
\DeclareMathOperator*{\argmin}{argmin} 
\DeclareMathOperator*{\argmax}{argmax} 
\DeclareMathOperator*{\prob}{\mathbb{P}}
\renewcommand{\eqref}[1]{eqn.~(\ref{#1})}
\newcommand{\figref}[1]{Fig.~\ref{#1}}
\newcommand{\secref}[1]{Section~\ref{#1}}
\newcommand{\thmref}[1]{Theorem~\ref{#1}}
\newcommand{\lemref}[1]{Lemma~\ref{#1}}
\newcommand{\corref}[1]{Corollary~\ref{#1}}
\newcommand{\indct}[1]{\boldsymbol{1}\!\left[ #1 \right]}
\icmltitlerunning{ShareBoost}
\begin{document}

\twocolumn[
\icmltitle{ShareBoost: Efficient Multiclass Learning with Feature Sharing}

\icmlauthor{Shai Shalev-Shwartz}{shais@cs.huji.ac.il}
\icmladdress{School of Computer Science and Engineering, the Hebrew University of Jerusalem, Israel}
\icmlauthor{Yonatan Wexler}{yonatan.wexler@orcam.com}
\icmladdress{OrCam Ltd., Jerusalem, Israel}
\icmlauthor{Amnon Shashua}{shashua@cs.huji.ac.il}
\icmladdress{OrCam Ltd., Jerusalem, Israel}

\icmlkeywords{Multi-class learning, Mixed-norms, Boosting}

\vskip 0.3in
]

\begin{abstract}
 Multiclass prediction is the problem of classifying an object into a
  relevant target class.  We consider the problem of learning a
  multiclass predictor that uses only few features, and in particular,
  the number of used features should increase sub-linearly with the
  number of possible classes. This implies that features should be
  shared by several classes. We describe and analyze the ShareBoost
  algorithm for learning a multiclass predictor that uses few shared
  features. We prove that ShareBoost efficiently finds a predictor
  that uses few shared features (if such a predictor exists) and that
  it has a small generalization error. We also describe how to use
  ShareBoost for learning a non-linear predictor that has a fast
  evaluation time. In a series of experiments with natural data sets
  we demonstrate the benefits of ShareBoost and evaluate its success
  relatively to other state-of-the-art approaches.
\end{abstract}

\section{Introduction}

Learning to classify an object into a relevant target class surfaces
in many domains such as document categorization, object recognition in
computer vision, and web advertisement. In multiclass learning
problems we use training examples to learn a classifier which will
later be used for accurately classifying new objects. Typically, the
classifier first calculates several features from the input object and
then classifies the object based on those features.  In many cases, it
is important that the runtime of the learned classifier will be
small. In particular, this requires that the learned classifier will
only rely on the value of few features.

We start with predictors that are based on \emph{linear} combinations
of features. Later, in \secref{sec:nonLinear}, we show how our
framework enables learning highly non-linear predictors by embedding
non-linearity in the construction of the features.  Requiring the
classifier to depend on few features is therefore equivalent to
sparseness of the linear weights of features. In recent years, the
problem of learning sparse vectors for linear classification or
regression has been given significant attention. While, in general,
finding the most accurate sparse predictor is known to be NP hard
\citep{Natarajan95,DavisMaAv97}, two main approaches have been
proposed for overcoming the hardness result. The first approach uses
$\ell_1$ norm as a surrogate for sparsity (e.g.  the Lasso algorithm
\cite{Tibshirani96b} and the compressed sensing literature
\cite{CandesTao05,Donoho06a}). The second approach relies on forward
greedy selection of features (e.g. Boosting \cite{FreundSc99b} in the
machine learning literature and orthogonal matching pursuit in the
signal processing community \citep{TroppGi07}).

A popular model for multiclass predictors maintains a weight vector for each one of the classes. In such case, even if the weight vector associated with each class is sparse, the overall number of used features might grow with the number of classes. Since the number of classes can be rather large, and our goal is to learn a model with an overall small number of features, we would like that the weight vectors will share the features with non-zero weights as much as possible.  Organizing the weight vectors of all classes as rows of a single matrix, this is equivalent to requiring sparsity of the \emph{columns} of the matrix.

In this paper we describe and analyze an efficient algorithm for learning a multiclass predictor whose corresponding matrix of weights has a small number of non-zero columns. We formally prove that if there exists an accurate matrix with a number of non-zero columns that grows sub-linearly with the number of classes, then our algorithm will also learn such a matrix. We apply our algorithm to natural multiclass learning problems and demonstrate its advantages over previously proposed state-of-the-art methods.

Our algorithm is a generalization of the forward greedy selection approach to sparsity in columns. An alternative approach, which has recently been studied in \cite{QuattoniCaCoDa09, Duchi}, generalizes the $\ell_1$ norm based approach, and relies on mixed-norms. We discuss the advantages of the greedy approach over mixed-norms in \secref{sec:related}.

\subsection{Formal problem statement}

Let $\V$ be the set of objects we would like to classify. For example,
$\V$ can be the set of gray scale images of a certain size. For each
object $\v \in \V$, we have a pool of predefined $d$ features, each of
which is a real number in $[-1,1]$. That is, we can represent each $\v
\in \V$ as a vector of features $\x \in [-1,1]^d$. We note that the
mapping from $\v$ to $\x$ can be non-linear and that $d$ can be very
large. For example, we can define $\x$ so that each element $x_i$
corresponds to some patch, $p \in \{\pm 1\}^{q \times q}$, and a
threshold $\theta$, where $x_i$ equals $1$ if there is a patch of $\v$
whose inner product with $p$ is higher than $\theta$. We discuss some
generic methods for constructing features in \secref{sec:nonLinear}.
From this point onward we assume that $\x$ is given.

The set of possible classes is denoted by $\Y = \{1,\ldots,k\}$. Our
goal is to learn a multiclass predictor, which is a mapping from the
features of an object into $\Y$. We focus on the set of predictors
parametrized by matrices $W \in \reals^{k,d}$ that takes the following
form:
\begin{equation} \label{eqn:hypothesis}
h_W(\x) = \argmax_{y \in \Y} (W\x)_y ~.
\end{equation}
That is, the matrix $W$ maps each $d$-dimensional feature vector into
a $k$-dimensional score vector, and the actual prediction is the index
of the maximal element of the score vector. If the maximizer is not
unique, we break ties arbitrarily.

Recall that our goal is to find a matrix $W$ with few non-zero columns. We denote by $W_{\cdot,i}$ the $i$'th column of $W$ and use the notation
$$
\|W\|_{\infty,0} = |\{ i : \|W_{\cdot,i}\|_\infty > 0\} |
$$
 to denote the number of columns of $W$ which are not identically the zero vector. More generally, given a matrix $W$ and a pair of norms $\|\cdot\|_p,\|\cdot\|_r$ we denote $\|W\|_{p,r} = \left\|(\|W_{\cdot,1}\|_p,\ldots,\|W_{\cdot,d}\|_p)\right\|_r$, that is, we apply the $p$-norm on the columns of $W$ and the $r$-norm on the resulting $d$-dimensional vector.

The $0\!\!-\!\!1$ loss of a multiclass predictor $h_W$ on an example $(\x,y)$ is defined as $\indct{h_W(\x) \neq y}$. That is, the $0\!\!-\!\!1$ loss equals $1$ if $h_W(\x) \neq y$ and $0$ otherwise. Since this loss function is not convex with respect to $W$, we use a surrogate convex loss function  based on the following easy to verify inequalities:
 \begin{align} \nonumber
\boldsymbol{1}[h_W(\x) \neq y] &\le \indct{h_W(\x) \neq y} - (W\x)_y +
(W\x)_{h_W(\x)} \\
&\le \max_{y' \in \Y} \indct{y' \neq y} - (W\x)_y + (W\x)_{y'} \label{eq:svm-loss}\\
&\le \ln \sum_{y' \in \Y} e^{\indct{y' \neq y} - (W\x)_y + (W\x)_{y'} }  \label{eq:log-loss}~.
\end{align}
We use the notation $\ell(W,(\x,y))$ to denote the right-hand side
(\eqref{eq:log-loss}) of the above. The loss given in \eqref{eq:svm-loss} is the multi-class hinge loss
\cite{CrammerSi03a} used in Support-Vector-Machines, whereas
$\ell(W,(\x,y))$ is the result of performing a ``soft-max'' operation:
$\max_x f(x) \le (1/p)\ln \sum_x e^{pf(x)}$, where equality holds
for ${p\rightarrow\infty}$. 

This logistic multiclass loss function $\ell(W,(\x,y))$ has several nice properties --- see for example \cite{Zhang04b}. Besides being a convex upper-bound on the $0\!\!-\!\!1$ loss, it is smooth.
The reason we need the loss function to
be both convex and smooth is as follows. If a function is convex, then
its first order approximation at any point gives us a lower bound on
the function at any other point. When the function is also smooth, the
first order approximation gives us both lower and upper bounds on the
value of the function at any other point\footnote{Smoothness
  guarantees that $|f(x)-f(x')-\nabla f(x')(x-x')|\le \beta\|x-x'\|^2$ for some $\beta$ and all $x,x'$. Therefore one can approximate $f(x)$ by $f(x')+\nabla f(x')(x-x')$ and the approximation error is upper bounded by the difference between $x,x'$.}. ShareBoost uses the gradient
of the loss function at the current solution (i.e. the first order
approximation of the loss) to make a greedy choice of which column to
update. To ensure that this greedy choice indeed yields a significant
improvement we must know that the first order approximation is indeed
close to the actual loss function, and for that we need both lower
and upper bounds on the quality of the first order approximation.

Given a training set $S = (\x_1,y_1),\ldots,(\x_m,y_m)$, the average training loss of a matrix $W$ is:
$
L(W) = \frac{1}{m} \sum_{(\x,y) \in S} \ell(W,(\x,y)) .$
We aim at approximately solving the problem
\begin{align} \label{eqn:optProb}
& \min_{W \in \reals^{k,d}} L(W)  ~~~\textrm{s.t.}~~ \|W\|_{\infty,0} \le s  ~.
\end{align}
That is, find the matrix $W$ with minimal training loss among all matrices with column sparsity of at most $s$, where $s$ is a user-defined parameter. Since $\ell(W,(\x,y))$ is an upper bound on $\indct{h_W(\x) \neq y}$, by minimizing $L(W)$ we also decrease the average $0\!\!-\!\!1$ error of $W$ over the training set. In \secref{sec:analysis} we show that for sparse models, a small training error is likely to yield a small error on unseen examples as well.

Regrettably, the constraint $\|W\|_{\infty,0} \leq s$ in \eqref{eqn:optProb} is non-convex, and
solving the optimization problem in \eqref{eqn:optProb} is NP-hard
\citep{Natarajan95,DavisMaAv97}. To overcome the hardness result, the ShareBoost algorithm will follow the forward greedy selection approach. The algorithm comes with formal generalization and sparsity guarantees (described in Section~\ref{sec:analysis}) that makes ShareBoost an attractive multiclass learning engine due to efficiency (both during training and at test time) and accuracy.

\subsection{Related Work} \label{sec:related}

The centrality of the multiclass learning problem has spurred the development of  various approaches for tackling the task. Perhaps the most straightforward approach is a reduction from multiclass to binary, e.g. the one-vs-rest or all pairs constructions. The more direct approach we choose, in particular, the multiclass predictors of the form given in \eqref{eqn:hypothesis}, has been extensively studied and showed a great success in practice --- see for example \cite{DudaHa73,Vapnik98,CrammerSi03a}.

An alternative construction, abbreviated as the \emph{single-vector} model, shares a single weight vector, for all the classes, paired with class-specific feature mappings. This construction is common in generalized additive
models~\cite{HastieTi95}, multiclass versions of boosting~\cite{FreundSc97,SchapireSi99}, and has been popularized lately due to its role in prediction with structured output where the number of classes is exponentially large (see e.g. \cite{TaskarGuKo03}). While this approach can yield predictors with a rather mild dependency of the required features on $k$ (see for example the analysis in \cite{Zhang04b, TaskarGuKo03, FinkShSiUl06}), it relies on a-priori assumptions on the structure of $\X$ and $\Y$. In contrast, in this paper we tackle general multiclass prediction problems, like object recognition or document classification, where it is not straightforward or even plausible how one would go about to construct a-priori good class specific feature mappings, and therefore the single-vector model is not adequate.

The class of predictors of the form given in \eqref{eqn:hypothesis} can be trained using Frobenius norm regularization (as done by multiclass SVM -- see e.g. \cite{CrammerSi03a}) or using $\ell_1$ regularization over all the entries of $W$. However, as pointed out in \cite{QuattoniCaCoDa09}, these regularizers might yield a matrix with many non-zeros columns, and hence, will lead to a predictor that uses many features.

The alternative approach, and the most relevant to our work,  is the use of  mix-norm regularizations like $\|W\|_{\infty,1}$ or $\|W\|_{2,1}$ \cite{LanckrietCrBaElJo04,TurlachVeWr05,ArgyriouEvPo06,Bach08,QuattoniCaCoDa09,Duchi,HuangZh10}. For example, \cite{Duchi} solves the following problem:
 \begin{align} \label{eqn:Duchi}
& \min_{W \in \reals^{k,d}} L(W)  +\lambda \|W\|_{\infty,1}  ~.
\end{align}
which can be viewed as a convex approximation of our objective
(\eqref{eqn:optProb}). This is advantageous from an
optimization point of view, as one can find the global optimum of a
convex problem, but it remains unclear how well the convex program
approximates the original goal. For example, in
Section~\ref{sec:sharing} we show cases where mix-norm regularization
does not yield sparse solutions while ShareBoost does yield a sparse
solution. Despite the fact that ShareBoost tackles a non-convex
program, and thus limited to local optimum solutions, we prove in
Theorem~\ref{thm:sparse} that under mild conditions ShareBoost is
{\em guaranteed} to find an accurate sparse solution whenever such a
solution exists and that the generalization error is bounded as shown in Theorem~\ref{thm:generalization}.

We note that
several recent papers (e.g. \cite{HuangZh10}) established exact
recovery guarantees for mixed norms, which may seem to be stronger
than our guarantee given in \thmref{thm:sparse}. However,  
the assumptions in \cite{HuangZh10} are much stronger than the
assumptions of \thmref{thm:sparse}. In particular, they
have strong noise assumptions and a group RIP like assumption
(Assumption 4.1-4.3 in their paper). In contrast, we impose no such
restrictions. We would like to stress that in many generic
practical cases, the assumptions of \cite{HuangZh10} will not
hold. For example, when using decision stumps, features will be highly
correlated which will violate Assumption 4.3 of \cite{HuangZh10}. 

Another advantage of ShareBoost is that its only parameter is the
desired number of non-zero columns of $W$. Furthermore, obtaining the
whole-regularization-path of ShareBoost, that is, the curve of
accuracy as a function of sparsity, can be performed by a single run
of ShareBoost, which is much easier than obtaining the whole
regularization path of the convex relaxation in
\eqref{eqn:Duchi}. Last but not least, ShareBoost can work even when
the initial number of features, $d$, is very large, as long as there
is an efficient way to choose the next feature. For example, when the
features are constructed using decision stumps, $d$ will be extremely
large, but ShareBoost can still be implemented efficiently. In
contrast, when $d$ is extremely large mix-norm regularization
techniques yield challenging optimization problems.

As mentioned before, ShareBoost follows the forward greedy selection
approach for tackling the hardness of solving \eqref{eqn:optProb}. The
greedy approach has been widely studied in the context of learning
sparse predictors for linear regression. However, in multiclass
problems, one needs sparsity of groups of variables (columns of $W$).
ShareBoost generalizes the fully corrective greedy selection
procedure given in \cite{ShalevSrZh10} to the case of selection of
groups of variables, and our analysis follows similar techniques.

Obtaining group sparsity by greedy methods has been also recently
studied in \cite{HuangZhMe09,MajumdarWa09}, and indeed, ShareBoost
shares similarities with these works. We differ from
\cite{HuangZhMe09} in that our analysis does not impose strong
assumptions (e.g. group-RIP) and so ShareBoost applies to a much wider
array of applications.  In addition, the specific criterion for
choosing the next feature is different. In \cite{HuangZhMe09}, a ratio
between difference in objective and different in costs is used. In
ShareBoost, the L1 norm of the gradient matrix is used. For the
multiclass problem with log loss, the criterion of ShareBoost is much
easier to compute, especially in large scale
problems. \cite{MajumdarWa09} suggested many other selection rules
that are geared toward the squared loss, which is far from being an
optimal loss function for multiclass problems.

Another related method is the JointBoost algorithm
\cite{TorralbaMuFr06}. While the original presentation in
\cite{TorralbaMuFr06} seems rather different than the type of
predictors we describe in \eqref{eqn:hypothesis}, it is possible to
show that JointBoost in fact learns a matrix $W$ with additional
constraints. In particular, the features $\x$ are assumed to be
decision stumps and each column $W_{\cdot,i}$ is constrained to be
$\alpha_i (\indct{1 \in C_i},\ldots,\indct{k \in C_i})$, where
$\alpha_i \in \reals$ and $C_i \subset \Y$. That is, the stump is
shared by all classes in the subset $C_i$.  JointBoost chooses such
shared decision stumps in a greedy manner by applying the GentleBoost
algorithm on top of this presentation.  A major disadvantage of
JointBoost is that in its pure form, it should exhaustively search $C$
among all $2^k$ possible subsets of $\Y$. In practice,
\cite{TorralbaMuFr06} relies on heuristics for finding $C$ on each
boosting step. In contrast, ShareBoost allows the columns of $W$ to be
any real numbers, thus allowing "soft" sharing between
classes. Therefore, ShareBoost has the same (or even richer)
expressive power comparing to JointBoost. Moreover, ShareBoost
automatically identifies the relatedness between classes
(corresponding to choosing the set $C$) without having to rely on
exhaustive search. ShareBoost is also fully corrective, in the sense
that it extracts all the information from the selected features before
adding new ones. This leads to higher accuracy while using less
features as was shown in our experiments on image classification.
Lastly, ShareBoost comes with theoretical guarantees.

Finally, we mention that feature sharing is merely one way for
transferring information across classes \cite{Thrun96} and several
alternative ways have been proposed in the literature such as target
embedding \cite{HsuKaLaZh09,BengioWeGr11}, shared hidden structure
\cite{LeCunBoBeHa98,AmitFiSrUl07}, shared prototypes
\cite{QuattoniCoDa08}, or sharing underlying metric
\cite{XingNgJoRu03}.

\section{The ShareBoost Algorithm}

ShareBoost is a forward greedy selection approach for solving
\eqref{eqn:optProb}. Usually, in a greedy approach, we update the
weight of one feature at a time. Now, we will update one column of $W$ at a time (since the desired sparsity is over columns). We will choose the column that maximizes the $\ell_1$ norm of the corresponding column of the gradient of the loss at $W$. Since $W$ is a matrix we have that $\nabla L(W)$ is a matrix of the partial derivatives of $L$. Denote
by $\nabla_r L(W)$ the $r$'th column of $\nabla L(W)$, that is, the vector $ \left(\frac{\partial L(W)}{\partial W_{1,r}},\ldots,\frac{\partial L(W)}{\partial W_{k,r}}\right)$. A standard calculation shows that
\begin{align*}
 \frac{\partial L(W)}{\partial W_{q,r}} &= \frac{1}{m} \sum_{(\x,y) \in S} \sum_{c \in \Y} \rho_c(\x,y)\, x_r ( \indct{q=c} - \indct{q=y} )
\end{align*}
where
\begin{equation} \label{eqn:rhoDef}
\rho_c(\x,y) =
\frac{e^{\indct{c \neq y} - (W\x)_{y} + (W\x)_{c} }}{\sum_{y' \in \Y} e^{\indct{y' \neq y} - (W\x)_{y} + (W\x)_{y'} }} .
\end{equation}
Note that $\sum_c \rho_c(\x,y) = 1$ for all $(\x,y)$. Therefore, we can rewrite,
\begin{align*}
\frac{\partial L(W)}{\partial W_{q,r}} &=  \frac{1}{m} \sum_{(\x,y)} x_r  (\rho_q(\x,y) - \indct{q=y}) ~.
\end{align*}
Based on the above we have
\begin{equation} \label{eqn:nablaDef}
\|\nabla_r L(W)\|_1 ~=~ \frac{1}{m} \sum_{q \in \Y} \left| \sum_{(\x,y)} x_r  (\rho_q(\x,y) - \indct{q=y}) \right| ~.
\end{equation}
Finally, after choosing the column for which $\|\nabla_r L(W)\|_1$ is maximized, we re-optimize all the columns of $W$ which were selected so far. The resulting algorithm is given in Algorithm \ref{algo:FGDc}.
\begin{algorithm}[h!]
\caption{ShareBoost}\label{algo:FGDc}
\begin{algorithmic}[1]
\STATE {\bf Initialize:} $W = 0$ ~;~ $I = \emptyset$
\FOR {t=1,2,\ldots,T}
\STATE For each class $c$ and example $(\x,y)$ define $\rho_c(\x,y)$ as in \eqref{eqn:rhoDef}
\STATE Choose feature $r$ that maximizes the right-hand side of \eqref{eqn:nablaDef}
\STATE $I \leftarrow I \cup \{r\}$
\STATE Set $W \leftarrow \argmin_{W} L(W) $ s.t. $W_{\cdot,i} = \boldsymbol{0}$ for all $i \notin I$
\ENDFOR
\end{algorithmic}
\end{algorithm}

The runtime of ShareBoost is as follows. Steps 3-5
requires $O(mdk)$. Step 6 is a convex optimization problem in $tk$
variables and can be performed using various methods. In our
experiments, we used Nesterov's accelerated gradient method
\cite{Nesterov04} 
whose runtime is $O(mtk/\sqrt{\epsilon})$ for a smooth objective, where $\epsilon$ is the desired
accuracy. Therefore, the overall runtime is $O(Tmdk +T^2mk/\sqrt{\epsilon})$. 
It is interesting to compare this runtime to the complexity of minimizing the mixed-norm regularization objective
given in 
\eqref{eqn:Duchi}. Since the objective is
no longer smooth, the runtime of using Nesterov's accelerated method
would be $O(mdk/\epsilon)$ which can be much larger than the runtime
of ShareBoost when $d \gg T$.

\section{Variants of ShareBoost} \label{sec:variants}

We now describe several variants of ShareBoost. The analysis we
present in \secref{sec:analysis} can be easily adapted for these variants as well. 

\subsection{Modifying the Greedy Choice Rule}

ShareBoost chooses the feature $r$ which maximizes the $\ell_1$ norm of the
$r$-th column of the gradient matrix. Our analysis shows that this
choice leads to a sufficient decrease of the objective
function. However, one can easily develop other ways for choosing a
feature which may potentially lead to an even larger decrease of the
objective. For example, we can choose a feature $r$ that minimizes
$L(W)$ over matrices $W$ with support of $I \cup \{r\}$. This will
lead to the maximal possible decrease of the objective function at
the current iteration. Of course, the runtime of choosing $r$ will now be much
larger. Some intermediate options are to choose $r$ that minimizes 

$$\min_{\alpha \in \reals} W + \alpha \nabla_r R(W)$$
 or to choose $r$ that minimizes
$$\min_{\w \in \reals^k} W + \w \e_r^\trans,$$
 where $\e_r^\trans$ is the all-zero
row vector except $1$ in the $r$'th position. 

\subsection{Selecting a Group of Features at a Time} \label{sec:groupVariant}
In some situations, features can be divided into groups where the
runtime of calculating a single feature in each group is almost the
same as the runtime of calculating all features in the group. In such
cases, it makes sense to choose groups of features at each iteration
of ShareBoost. This can be easily done by simply choosing the group of
features $J$ that maximizes $\sum_{j \in J} \|\nabla_j L(W)\|_1$. 

\subsection{Adding Regularization}
Our analysis implies that when $|S|$ is significantly larger than
$\tilde{O}(Tk)$ then ShareBoost will not overfit. When this is not the
case, we can incorporate regularization in the objective of ShareBoost
in order to prevent overfitting. One simple way is to add to the
objective function $L(W)$ a Frobenius norm regularization term of the
form $\lambda \sum_{i,j} W_{i,j}^2$, where $\lambda$ is a
regularization parameter. It is easy to verify that this is a smooth
and convex function and therefore we can easily adapt ShareBoost to
deal with this regularized objective. It is also possible to rely on
other norms such as the $\ell_1$ norm or the $\ell_\infty/\ell_1$
mixed-norm. However, there is one technicality due to the fact that these
norms are not smooth. We can overcome this problem by defining smooth
approximations to these norms. The main idea is to first note that for
a scalar $a$ we have $|a| = \max\{a,-a\}$ and therefore we can rewrite
the aforementioned norms using max and sum operations. Then, we can
replace each max expression with its soft-max counterpart and obtain a
smooth version of the overall norm function. For example, a smooth
version of the $\ell_\infty/\ell_1$ norm will be
$
\|W\|_{\infty,1} \approx \frac{1}{\beta} \sum_{j=1}^d \log\left(\sum_{i=1}^k (e^{\beta W_{i,j}} +
e^{-\beta W_{i,j}})\right) ~,
$
where $\beta \ge 1$ controls the tradeoff between quality of
approximation and smoothness. 

\section{Non-Linear Prediction Rules} \label{sec:nonLinear}
We now demonstrate how ShareBoost can be used for learning non-linear
predictors. The main idea is similar to the approach taken by Boosting
and SVM.  That is, we construct a non-linear predictor by first
mapping the original features into a higher dimensional space and then
learning a linear predictor in that space, which corresponds to a
non-linear predictor over the original feature space. To illustrate
this idea we present two concrete mappings. The first is the decision
stumps method which is widely used by Boosting algorithms. The second
approach shows how to use ShareBoost for learning piece-wise linear
predictors and is inspired by the super-vectors construction recently
described in \cite{ZhouYuZhHu10}.

\subsection{ShareBoost with Decision Stumps}
Let $\v \in \reals^p$ be the original feature vector representing an
object. A decision stump is a binary feature of the form $\indct{v_i
  \le \theta}$, for some feature $i \in \{1,\ldots,p\}$ and threshold
$\theta \in \reals$. To construct a non-linear predictor we can map
each object $\v$ into a feature-vector $\x$ that contains all possible
decision stumps. Naturally, the dimensionality of $\x$ is very large
(in fact, can even be infinite), and calculating Step 4 of ShareBoost
may take forever. Luckily, a simple trick yields an efficient
solution. First note that for each $i$, all stump features
corresponding to $i$ can get at most $m+1$ values on a training set of
size $m$. Therefore, if we sort the values of $v_i$ over the $m$
examples in the training set, we can calculate the value of the
right-hand side of \eqref{eqn:nablaDef} for all possible values of
$\theta$ in total time of $O(m)$. Thus, ShareBoost can be implemented
efficiently with decision stumps.
\begin{figure}
\begin{center}
\includegraphics[width=0.25\textwidth]{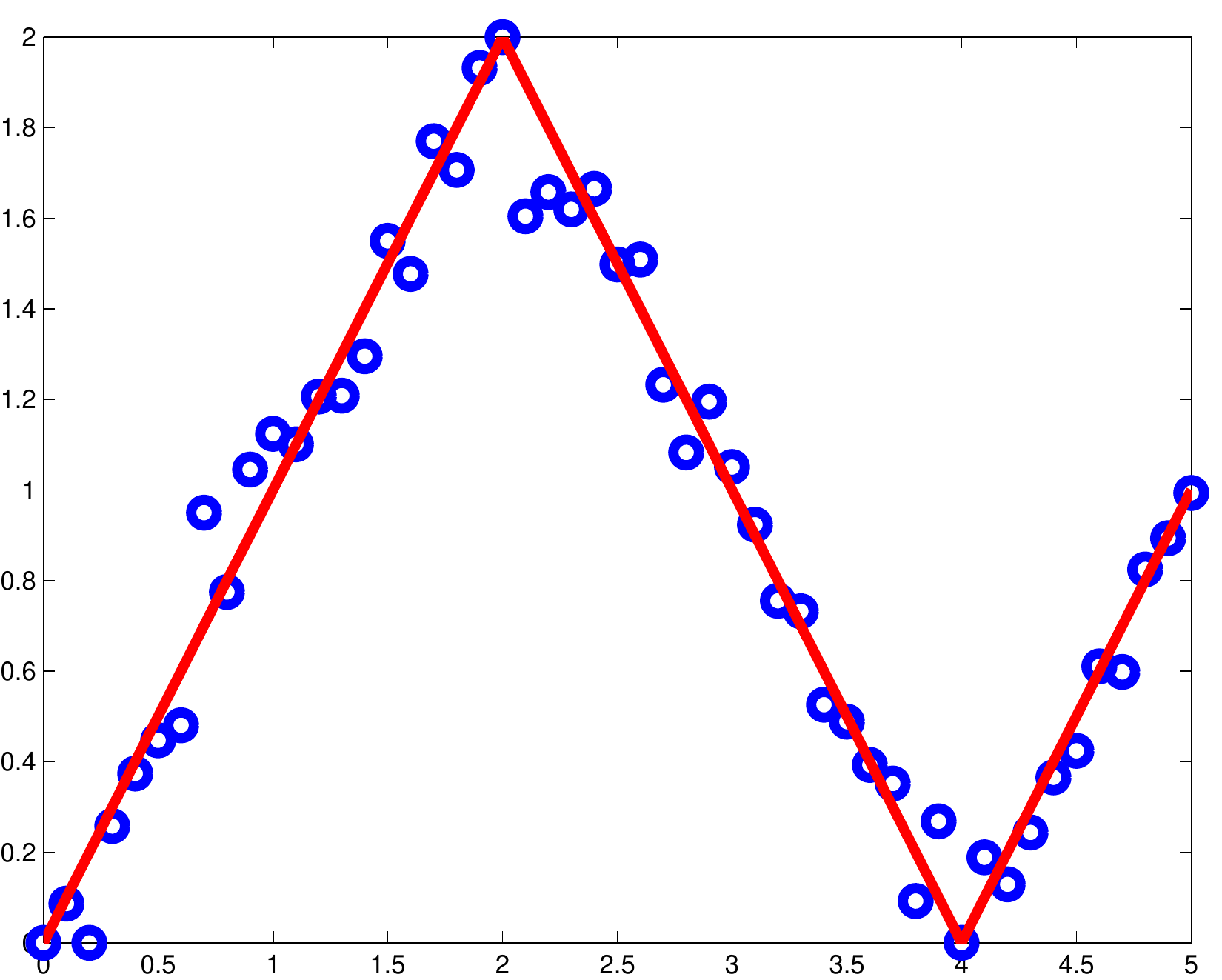}
\end{center}
\caption{{\footnotesize Motivating super vectors.}} \vspace{-.5cm}\label{fig:superVectors}
\end{figure}

\subsection{Learning Piece-wise Linear Predictors with
  ShareBoost}\label{sec:supervector}

To motivate our next construction let us consider first a simple one
dimensional function estimation problem. Given sample
$(x_1,y_i),\ldots,(x_m,y_m)$ we would like to find a function $f :
\reals \to \reals$ such that $f(x_i) \approx y_i$ for all $i$. The
class of piece-wise linear functions can be a good candidate for the
approximation function $f$. See for example an illustration in
\figref{fig:superVectors}. 
In fact, it is easy to verify that all
smooth functions can be approximated by piece-wise linear functions
(see for example the discussion in \cite{ZhouYuZhHu10}).
In general, we can express piece-wise linear vector-valued functions as
$$
f(\v) = \sum_{j=1}^q \indct{\|\v-\v_j\| < r_j} (\inner{\u_j,\v} + b_j) ~,
$$
where $q$ is the number of pieces, $(\u_j,b_j)$ represents the
linear function corresponding to piece $j$, and $(\v_j,r_j)$ represents the
center and radius of piece $j$. This expression can be also written as
a linear function over a different domain, $f(\v) =
\inner{\w,\psi(\v)}$ where
{\small \[
\psi(\v) = \left[ \, \indct{\|\v-\v_1\| < r_1} [ \v \,,\, 1 ] \,,\, \ldots
\,,\,  \indct{\|\v-\v_q\| < r_q} [ \v \,,\, 1 ] \,\right] ~.
\]}
In the case of learning a multiclass predictor, we shall learn a
predictor $\v \mapsto W \psi(\v)$, where $W$ will be a $k$ by
$\textrm{dim}(\psi(\v))$ matrix. ShareBoost can be used for learning
$W$. Furthermore, we can apply the variant of ShareBoost described in
\secref{sec:groupVariant} to learn a piece-wise linear model which few
pieces (that is, each group of features will correspond to one piece
of the model). In practice, we first define a large set of candidate
centers by applying some clustering method to the training examples,
and second we define a set of possible radiuses by taking values of
quantiles from the training examples. Then, we train ShareBoost so as
to choose a multiclass predictor that only use few pairs $(\v_j,r_j)$.

The advantage of using ShareBoost here is that while it learns a
non-linear model it will try to find a model with few linear
``pieces'', which is advantageous both in terms of test runtime as
well as in terms of generalization performance.

\section{Analysis} \label{sec:analysis}

In this section we provide formal guarantees for the ShareBoost algorithm. The proofs are deferred to the appendix.
We first show that if the algorithm has managed to find a matrix $W$ with a small number of non-zero columns and a small training error, then the generalization error of $W$ is also small.
The bound below is in terms of the $0\!\!-\!\!1$ loss. A related bound, which is given in terms of the convex loss function, is described in \cite{Zhang04b}.
\begin{theorem} \label{thm:generalization}
Suppose that the ShareBoost algorithm runs for $T$ iterations and let $W$ be its output matrix.
Then, with probability of at least $1-\delta$ over the choice of the training set $S$ we have that
\bea
&&\prob_{(\x,y) \sim \D}  \left[h_W(\x) \neq y \right] \le
\prob_{(\x,y) \sim S} \left[h_W(\x) \neq y \right]  \\
&&+ O\left( \sqrt{\frac{Tk\log(Tk)\log(k) + T\log(d) + \log(1/\delta)}{|S|} } \right)
\eea
\end{theorem}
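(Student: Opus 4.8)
The one structural fact to exploit is that after $T$ iterations the output matrix $W$ has at most $T$ non-zero columns, so $h_W$ lies in the class $\H_T = \bigcup_{I} \H_I$, where the union is over all $I \subseteq \{1,\ldots,d\}$ with $|I| \le T$ and $\H_I = \{ h_W : W \in \reals^{k,d},\ \supp(\text{columns of }W) \subseteq I\}$. I would run a two-level argument: (i) a uniform convergence bound for each \emph{fixed} $\H_I$, and (ii) a union bound over the at most $(d+1)^T$ admissible sets $I$. The key point making (ii) legitimate is that although ShareBoost selects its support greedily from the data, that support is always one of these $(d+1)^T$ sets, so one is entitled to union-bound over all of them rather than condition on the realized one.

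For step (i), fix $I$ with $|I| = T$. Then $\H_I$ is a copy of the class of linear $k$-class predictors $\x \mapsto \argmax_{y \in \Y}(U\x)_y$ with $U \in \reals^{k,T}$. The labeling such a predictor induces on a sample is determined by the signs of the $\binom{k}{2}$ pairwise-comparison functionals $\x \mapsto (U\x)_y - (U\x)_{y'}$, each of which is linear in the $Tk$ entries of $U$; a standard hyperplane-arrangement / Sauer-type count then bounds the number of distinct labelings $\H_I$ realizes on $m$ points, and hence its Natarajan dimension, by $O(Tk\log(Tk))$. Plugging this into the standard generalization bound for multiclass classes of bounded Natarajan dimension over $k$ labels (cf.\ the multiclass analysis in \cite{Zhang04b} and standard references) gives, with probability at least $1-\delta_I$ over $S$, that every $h \in \H_I$ obeys
\[
\prob_{(\x,y)\sim\D}[h(\x)\neq y] \le \prob_{(\x,y)\sim S}[h(\x)\neq y] + O\!\left(\sqrt{\tfrac{Tk\log(Tk)\log(k) + \log(1/\delta_I)}{|S|}}\right).
\]

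For step (ii) I would take $\delta_I = \delta/(d+1)^T$ for every admissible $I$, so that $\sum_I \delta_I \le \delta$ and $\log(1/\delta_I) = \log(1/\delta) + T\log(d+1)$; a union bound then makes the display above hold simultaneously over all $I$ with probability at least $1-\delta$, in particular for the (random) support of the ShareBoost output $W$, and substituting $\log(1/\delta_I)$ yields exactly the stated rate, since $\prob_{(\x,y)\sim S}[h_W(\x)\neq y]$ is the empirical $0\!\!-\!\!1$ error. The routine parts are the sign-pattern counting and the invocation of the off-the-shelf multiclass bound; the step I expect to require the most care is matching the logarithmic factors --- checking that the Natarajan-dimension estimate for sparse linear multiclass predictors, fed through the particular multiclass uniform-convergence theorem one picks, produces precisely the $Tk\log(Tk)\log(k)$ term and not something carrying an extra $\log|S|$, which may force using a slightly sharper covering-number or chaining bound --- together with the (easy but essential) observation that the union must range over all possible supports.
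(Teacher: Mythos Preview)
Your proposal is correct and follows essentially the same route as the paper's own proof: fix the support $I$ of size $T$, invoke a Natarajan-dimension bound for linear $k$-class predictors on $T$ features to get the $Tk\log(Tk)\log(k)$ term, and then union-bound over the $\binom{d}{T}$ possible supports to pick up the $T\log(d)$ term. The only cosmetic difference is that the paper offloads step (i) to a citation (Theorem~25 and Eq.~6 of \cite{DanielySaBeSh11}) rather than sketching the sign-pattern count, and your worry about stray $\log|S|$ factors is exactly the place where one must choose the cited multiclass uniform-convergence theorem carefully.
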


Next, we analyze the sparsity guarantees of ShareBoost. As mentioned previously, exactly solving \eqref{eqn:optProb} is known to be NP hard. The following main theorem gives an interesting approximation guarantee. It tells us that if there exists an accurate solution with small $\ell_{\infty,1}$ norm, then the ShareBoost algorithm will find a good sparse solution. 
\begin{theorem} \label{thm:sparse}
Let $\epsilon > 0$ and let $W^\star$ be an arbitrary matrix. Assume that we run the ShareBoost algorithm for $T = \left\lceil  4\,\tfrac{1}{\epsilon}\,\|W^\star\|_{\infty,1}^2 \right\rceil$
iterations and let $W$ be the output matrix. Then, $\|W\|_{\infty,0} \le T$ and $L(W) \le L(W^\star) + \epsilon$.
\end{theorem}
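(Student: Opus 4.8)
The plan is the standard potential/progress analysis for fully‑corrective forward greedy selection, carried over from single variables to columns. The sparsity claim $\|W\|_{\infty,0}\le T$ is immediate, since each iteration enlarges $I$ by at most one index and the support of the output is contained in $I$; so I focus on the loss bound. Write $W^{(t)}$ for the matrix produced after $t$ iterations ($W^{(0)}=0$), $I_t$ for its support, $r_t$ for the feature chosen in iteration $t$, and set $g_t\eqdef\|\nabla_{r_t}L(W^{(t-1)})\|_1=\max_r\|\nabla_r L(W^{(t-1)})\|_1$ (the quantity maximized in Step~4) and $\epsilon_t\eqdef L(W^{(t)})-L(W^\star)$. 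Since $W^{(t-1)}$ is feasible for the minimization in Step~6 that defines $W^{(t)}$, the sequence $L(W^{(t)})$ is non‑increasing; hence it suffices to produce \emph{some} $t\le T$ with $\epsilon_t\le\epsilon$, and in particular we may assume $\epsilon_{t}\ge 0$ for all $t< T$ (otherwise we are already done). The degenerate case $\|W^\star\|_{\infty,1}=0$ forces $T=\lceil 0\rceil=0$ and $W=0=W^\star$, so assume $\|W^\star\|_{\infty,1}>0$ below.

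Next I would establish two facts per iteration. First, a progress inequality: because Step~6 re‑optimizes over all columns in $I_{t-1}\cup\{r_t\}$, we have $L(W^{(t)})\le\min_{\v\in\reals^k}L(W^{(t-1)}+\v\e_{r_t}^\trans)$; taking $\v=-(g_t/\beta)\,\sgn\!\big(\nabla_{r_t}L(W^{(t-1)})\big)$ and invoking smoothness of $L$ in a single‑column direction (see below) gives $\epsilon_t\le\epsilon_{t-1}-\tfrac{1}{2\beta}g_t^{\,2}$. Second, a gradient lower bound: the re‑optimization in Step~6 also forces $\nabla_r L(W^{(t-1)})=\boldsymbol{0}$ for every $r\in I_{t-1}$ (first‑order optimality in the free columns — assuming, as the statement of Step~6 does, that the minimizer is attained), so by convexity of $L$ and Hölder's inequality,
\begin{align*}
\epsilon_{t-1} &\le \inner{\nabla L(W^{(t-1)}),\,W^{(t-1)}-W^\star} = -\!\!\sum_{r\notin I_{t-1}}\!\!\inner{\nabla_r L(W^{(t-1)}),\,W^\star_{\cdot,r}}\\
&\le \sum_r\|\nabla_r L(W^{(t-1)})\|_1\,\|W^\star_{\cdot,r}\|_\infty \;\le\; g_t\,\|W^\star\|_{\infty,1}.
\end{align*}
Combining the two facts yields $\epsilon_t\le\epsilon_{t-1}-c\,\epsilon_{t-1}^2$ with $c=\tfrac{1}{2\beta\|W^\star\|_{\infty,1}^2}$, valid whenever $\epsilon_{t-1}\ge 0$.

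To turn this recursion into the rate: if ever $\epsilon_{t-1}\ge 1/c$ the recursion forces $\epsilon_t\le 0$ and we are done by monotonicity; otherwise $c\,\epsilon_{t-1}\in(0,1)$, hence $1/\epsilon_t\ge 1/\epsilon_{t-1}+c$, and telescoping from $t=1$ gives $1/\epsilon_T\ge Tc$, i.e. $\epsilon_T\le 2\beta\|W^\star\|_{\infty,1}^2/T$. With $T=\left\lceil 4\,\epsilon^{-1}\|W^\star\|_{\infty,1}^2\right\rceil\ge 4\,\epsilon^{-1}\|W^\star\|_{\infty,1}^2$ this is at most $\epsilon$ provided $\beta\le 2$.

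The \emph{main obstacle} is precisely this smoothness estimate: that for any $W$, any feature $r$ (recall $|x_r|\le 1$ on every training point), and any $\v\in\reals^k$,
\[
L(W+\v\e_r^\trans)\;\le\;L(W)+\inner{\nabla_r L(W),\,\v}+\tfrac{\beta}{2}\,\|\v\|_\infty^2 .
\]
I would get it from a second‑order Taylor expansion of each per‑example loss $u\mapsto\ln\sum_{y'}\exp\!\big(\indct{y'\ne y}-u_y+u_{y'}\big)$ along the direction $x_r\v$: a direct computation shows its Hessian at any point equals $\diag(\rho)-\rho\rho^\trans$, with $\rho$ the probability vector of \eqref{eqn:rhoDef} (the $u_y$ shift cancels), so the relevant quadratic form is $x_r^2\cdot\mathrm{Var}_{c\sim\rho}(v_c)\le x_r^2\|\v\|_\infty^2\le\|\v\|_\infty^2$, giving $\beta=1$; averaging over $S$ preserves the bound (the constant $4$ in the theorem is a safe overestimate). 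Everything else — convexity of $L$, monotonicity of the iterates, and the elementary recursion lemma — is routine.
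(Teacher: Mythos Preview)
Your proof is correct and follows the same scheme as the paper: establish single-column smoothness of $L$ (via a Hessian bound on the per-example log-sum-exp), combine it with convexity and first-order optimality of the fully-corrective step to obtain $\epsilon_t\le\epsilon_{t-1}-c\,\epsilon_{t-1}^2$ with $c=\Theta(\|W^\star\|_{\infty,1}^{-2})$, and then convert this recursion into the $O(1/t)$ rate. Two small points of difference are worth recording. First, your Hessian computation is cleaner and sharper than the paper's: by observing that the $-u_y$ shift is affine and hence disappears at second order, you identify the per-example Hessian as the softmax covariance $\diag(\rho)-\rho\rho^\trans$, so the quadratic form is $\mathrm{Var}_{c\sim\rho}(v_c)\le\|\v\|_\infty^2$, i.e.\ $\beta=1$; the paper instead carries out a coordinate-by-coordinate calculation singling out the index $y$ and obtains only $\v^\trans H\v\le 2\|\v\|_\infty^2$ (effectively $\beta=2$), which is exactly what the constant $4$ in the theorem accommodates. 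Second, you prove the $1/\epsilon_t\ge 1/\epsilon_{t-1}+c$ telescoping directly, whereas the paper outsources this step to an external recursion lemma. Neither change alters the architecture of the argument.
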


\section{Feature Sharing --- Illustrative Examples}
\label{sec:sharing}

In this section we present illustrative examples, showing that
whenever strong feature sharing is possible then ShareBoost will find
it, while competitive methods might fail to produce solutions with a
small number of features.

In the analysis of the examples below we use the following simple corollary of \thmref{thm:sparse}.
\begin{corollary} \label{cor:shared}
Assume that there exists a matrix $W^\star$ such that $L(W^\star) \le \epsilon$, all entries of $W^\star$ are in $[-c,c]$, and $\|W^\star\|_{\infty,0} = r$. Then, ShareBoost will find a matrix $W$ with $L(W) \le 2\epsilon$ and $\|W\|_{\infty,0} \le 4r^2 c^2/\epsilon$.
\end{corollary}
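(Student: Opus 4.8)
The plan is to obtain this as a direct consequence of Theorem~\ref{thm:sparse}, instantiating the slack parameter of that theorem with the given $\epsilon$. The one quantity appearing in Theorem~\ref{thm:sparse} that is not already phrased in the language of the corollary is $\|W^\star\|_{\infty,1}$, so the first step is to bound it. By definition $\|W^\star\|_{\infty,1} = \sum_{i=1}^d \|W^\star_{\cdot,i}\|_\infty$; the zero columns contribute nothing, so this is really a sum over the at most $r$ indices $i$ with $W^\star_{\cdot,i} \ne \boldsymbol{0}$. Since every entry of $W^\star$ lies in $[-c,c]$, each such column satisfies $\|W^\star_{\cdot,i}\|_\infty \le c$, and hence $\|W^\star\|_{\infty,1} \le rc$, so that $\|W^\star\|_{\infty,1}^2 \le r^2 c^2$.

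Next I would apply Theorem~\ref{thm:sparse} to this $W^\star$ with the slack set to $\epsilon$. Running ShareBoost for $T = \lceil 4\,\tfrac{1}{\epsilon}\,\|W^\star\|_{\infty,1}^2\rceil$ iterations, the theorem yields an output matrix $W$ with $\|W\|_{\infty,0} \le T$ and $L(W) \le L(W^\star) + \epsilon$. The loss bound is then immediate, since $L(W^\star) \le \epsilon$ gives $L(W) \le \epsilon + \epsilon = 2\epsilon$. For the sparsity bound, combine $\|W\|_{\infty,0} \le T$ with the estimate of the first step to get $\|W\|_{\infty,0} \le \lceil 4 r^2 c^2/\epsilon\rceil$, and dropping the (inconsequential) ceiling yields the stated form $\|W\|_{\infty,0} \le 4 r^2 c^2/\epsilon$.

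There is essentially no obstacle here beyond Theorem~\ref{thm:sparse} itself: the corollary is just a repackaging of that result once one observes that a matrix with entries in $[-c,c]$ and $r$ non-zero columns has $\ell_{\infty,1}$ norm at most $rc$. The only points that warrant a word of care are the rounding in the definition of $T$ (harmless; one may alternatively state the bound as $\lceil 4 r^2 c^2/\epsilon\rceil$) and the degenerate case $W^\star = \boldsymbol{0}$ (where $r = 0$, $L(\boldsymbol{0}) \le \epsilon$, and one simply takes $T = 0$), neither of which affects the substance of the statement.
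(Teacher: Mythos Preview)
Your derivation is correct and matches the paper's intent: the paper does not spell out a proof but presents the corollary as an immediate consequence of \thmref{thm:sparse}, and your argument---bounding $\|W^\star\|_{\infty,1}\le rc$ and then invoking the theorem with slack $\epsilon$---is exactly that derivation. Your remark about the ceiling is apt (strictly speaking the bound is $\lceil 4r^2c^2/\epsilon\rceil$), but this is the paper's own looseness, not a flaw in your reasoning.
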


The first example we present shows an exponential gap between the
number of features required by ShareBoost (as well as mixed norms) and the number of features
required by $\ell_2$ or $\ell_1$ regularization methods.
Consider a set of examples such that each
example, $(\x,y)$, is of the form $\x =
[\textrm{bin}(y) , 2\log(k)\,\e^y] \in \reals^{\log(k)+k}$, where
$\textrm{bin}(y) \in \{\pm 1\}^{\log(k)}$ is the binary representation of the number $y$ in
the alphabet $\{\pm 1\}$ and $\e^y$ is the vector which is zero
everywhere except $1$ in the $y$'th coordinate. For example, if $k=4$
then $\textrm{bin}(1) = [-1, 1]$,  $\textrm{bin}(2) = [1 ,-1]$,
$\textrm{bin}(3) = [1, 1]$, and $\textrm{bin}(4) = [-1 ,-1]$. 

Consider two matrices. The first matrix, denoted $W^{(s)}$, is the
matrix whose row $y$ equals to $[\textrm{bin}(y) , (0,\ldots,0)]$. The
second matrix, denoted $W^{(f)}$, is the matrix whose row $y$ equals
to $[(0,\ldots,0) , \e^y]$. Clearly,  the number of
features used by $h_{W^{(s)}}$ is $\log(k)$ while the number of
features used by  $h_{W^{(f)}}$ is $k$. 

Observe that both $h_{W^{(f)}}(\x)$ and
$h_{W^{(s)}}(\x)$ (see definition in \eqref{eqn:hypothesis}), will
make perfect predictions on the training set. 
Furthermore, since for each example $(\x,y)$, for each $r \neq y$ we
have that $(W^{(s)}\x)_r \in [-\log(k),\log(k)-2]$, for the logistic multiclass loss, for any $c>0$ we
have that 
\begin{align*}
L(cW^{(f)}) &= \log(1 + (k-1)e^{1-2c\log(k)}) \\
&< L(cW^{(s)}) \\
&< \log(1+(k-1)e^{1-c(\log(k)-2)})~.
\end{align*}
It follows that for 
\[ c \ge \frac{1 + \log(k-1)-\log(e^{\epsilon}-1)}{\log(k)-2}
\] we have that  
$L(cW^{(s)}) \le \epsilon$.

Consider an algorithm that solves the regularized problem
\[
\min_W L(W) + \lambda\, \|W\|_{p,p} ~,
\]
where $p$ is either $1$ or $2$. In both cases, we have that\footnote{$\|W^{(f)}\|_{p,p}^p=k$ whereas $\|W^{(s)}\|_{p,p}^p=k\log(k)$.}
$\|W^{(f)}\|_{p,p} < \|W^{(s)}\|_{p,p}$.
It follows that for any value
of $\lambda$, and for any $c>0$, the value of the objective at
$cW^{(f)}$ is smaller than the value at $cW^{(s)}$. In fact, it is not
hard to show that the optimal solution takes the form $c W^{(f)}$ for
some $c>0$. Therefore, no matter what the regularization parameter
$\lambda$ is, the solution of the above regularized problem will use
$k$ features, even though there exists a rather good solution that
relies on $\log(k)$ shared features.

In contrast, using \corref{cor:shared} we know that if we stop
ShareBoost after $\poly(\log(k))$ iterations it will produce a matrix
that uses only $\poly(\log(k))$ features and has a small loss.
Similarly, it is possible to show that for an appropriate
regularization parameter, the mix-norm regularization
$\|W\|_{\infty,1}$ will also yield the matrix $W^{(s)}$ rather than
the matrix $W^{(f)}$.

In our second example we show that in some situations using the
mix-norm regularization,
\[\min_W L(W) + \lambda \|W\|_{\infty,1} ~,
\] will also fail to produce a sparse solution, while ShareBoost is
still guaranteed to learn a sparse solution. Let $s$ be an integer and
consider examples $(\x,y)$ where each $\x$ is composed of $s$ blocks,
each of which is in $\{\pm 1\}^{\log(k)}$. We consider two types
of examples. In the first type, each block of $\x$ equals to
$\textrm{bin}(y)$. In the second type, we generate example as in the
first type, but then we zero one of the blocks (where we choose
uniformly at random which block to zero). As before, $(1-\epsilon)m$
examples are of the first type while $\epsilon m$ examples are of the
second type.

Consider again two matrices. The first matrix, denoted $W^{(s)}$, is
the matrix whose row $y$ equals to $[\textrm{bin}(y) , (0,\ldots,0)]$.
The second matrix, denoted $W^{(f)}$, is the matrix whose row $y$
equals to $[\textrm{bin}(y) , \ldots, \textrm{bin}(y)]/s$.
Note that
$\|W^{(f)}\|_{\infty,1} = \|W^{(s)}\|_{\infty,1}$. In addition, for
any $(\x,y)$ of the second type we have that $\E[W^{(s)}\x] =
W^{(f)}\x$, where expectation is with respect to the choice of which
block to zero. Since the loss function is strictly convex, it follows
from Jensen's inequality that $L(W^{(f)}) < L(W^{(s)})$. We have thus
shown that using the $(\infty,1)$ mix-norm as a regularization will
prefer the matrix $W^{(f)}$ over $W^{(s)}$. In fact, it is possible to
show that the minimizer of $L(W) + \lambda \|W\|_{\infty,1}$ will be
of the form $c W^{(f)}$ for some $c$. Since the number of blocks, $s$,
was arbitrarily large, and since ShareBoost is guaranteed to learn a
matrix with at most $\poly(\log(k))$ non-zero columns, we conclude
that there can be a substantial gap between mix-norm regularization
and ShareBoost. The advantage of ShareBoost in this example follows
from its ability to break ties (even in an arbitrary way). 

Naturally, the aforementioned examples are synthetic and capture
extreme situations. However, in our experiments below we show that
ShareBoost performs better than mixed-norm regularization on natural
data sets as well.

\section{Experiments}

In this section we demonstrate the merits (and pitfalls) of ShareBoost
by comparing it to alternative algorithms in different scenarios.  The
first experiment exemplifies the feature sharing property of
ShareBoost. We perform experiments with an OCR data set and
demonstrate a mild growth of the number of features as the number of
classes grows from 2 to 36.  The second experiment compares
ShareBoost to mixed-norm regularization and to the JointBoost
algorithm of \cite{TorralbaMuFr06}. We follow the same experimental
setup as in \cite{Duchi}. The main finding is that ShareBoost
outperforms the mixed-norm regularization method when the output
predictor needs to be very sparse, while mixed-norm regularization can
be better in the regime of rather dense predictors. We also show that
ShareBoost is both faster and more accurate than JointBoost. The third and final set of experiments is on the MNIST handwritten digit dataset where we demonstrate state-of-the-art accuracy at extremely efficient runtime performance.

\subsection{Feature Sharing}

The main motivation for deriving the ShareBoost algorithm is the need
for a multiclass predictor that uses only few features, and in
particular, the number of features should increase slowly with the
number of classes. To demonstrate this property of ShareBoost we
experimented with the Char74k data set which consists of images of
digits and letters. We trained ShareBoost with the number of classes
varying from 2 classes to the 36 classes corresponding to the 10
digits and 26 capital letters.  We calculated how many features were
required to achieve a certain fixed accuracy as a function of the
number of classes. The description of the
feature space is described in Section~\ref{sec:mnist}. 
\begin{figure}
\begin{center}
\includegraphics[width=0.5\textwidth]{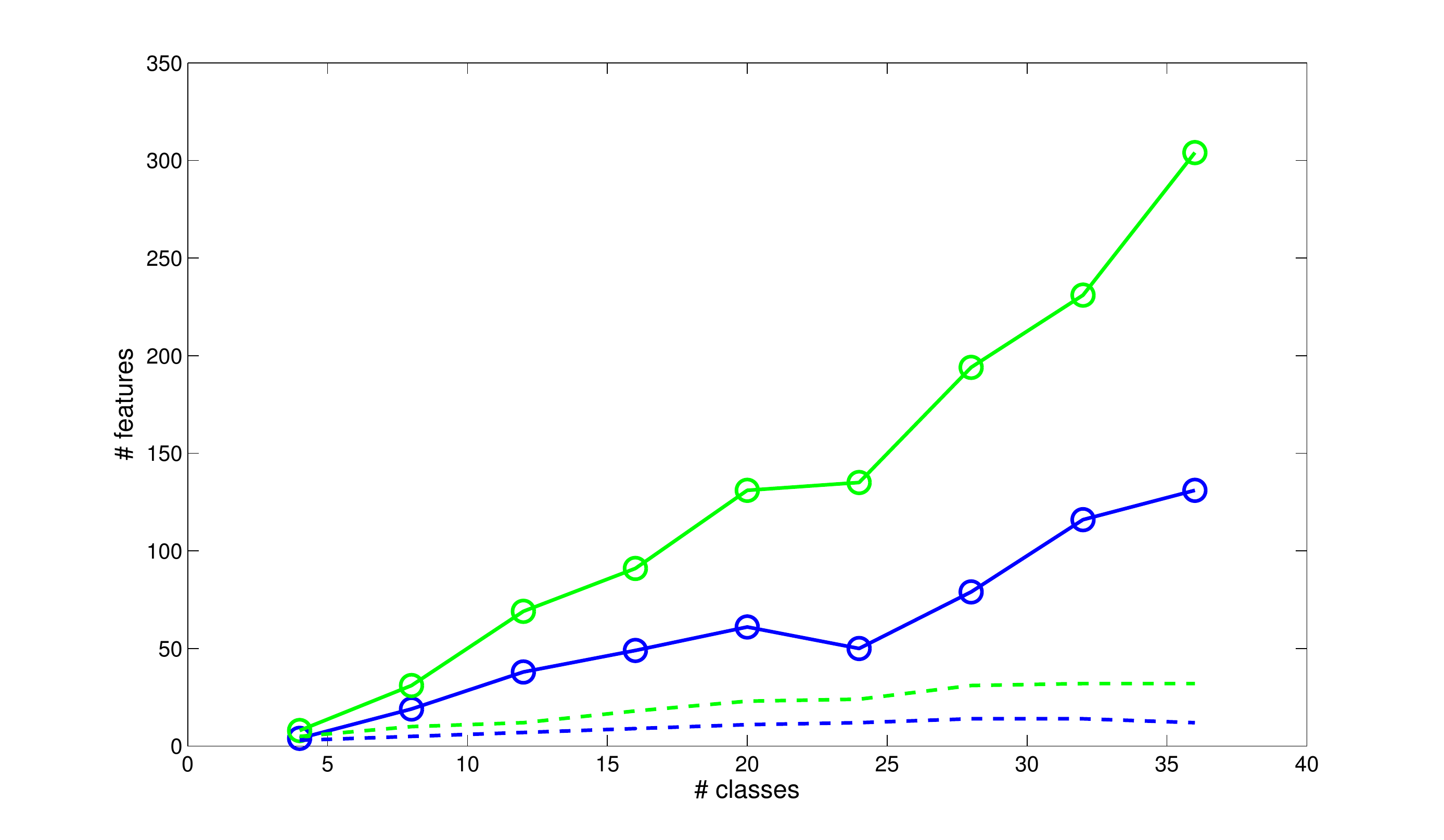}
\end{center}
\caption{{\small The number of features required to achieve a fixed accuracy
  as a function of the number of classes for ShareBoost (dashed) and the
  1-vs-rest (solid-circles). The blue lines are for a target error of
 20\% and the green lines are for 8\%.}} \vspace{-0.5cm}
\label{fig:sharing}
\end{figure}

We compared ShareBoost to the 1-vs-rest approach, where in the latter,
we trained each binary classifier using the same mechanism as used by
ShareBoost. Namely, we minimize the binary logistic loss using a
greedy algorithm. Both methods aim at constructing sparse predictors
using the same greedy approach. The difference between the methods is that
ShareBoost selects features in a shared manner while the 1-vs-rest
approach selects features for each binary problem separately.  In
\figref{fig:sharing} we plot the overall number of features required
by both methods to achieve a fixed accuracy on the test set as a
function of the number of classes. As can be easily seen, the increase
in the number of required features is mild for ShareBoost but
significant for the 1-vs-rest approach.

\subsection{Comparing ShareBoost to Mixed-Norms Regularization} \label{sec:duchi}

Our next experiment compares ShareBoost to the use of mixed-norm
regularization (see \eqref{eqn:Duchi}) as a surrogate for the
non-convex sparsity constraint. See \secref{sec:related} for
description of the approach. To make the comparison fair, we
followed the same experimental setup as in \cite{Duchi} (using code
provided by \author{Duchi}). 

We calculated the whole regularization path for the mixed-norm
regularization by running the algorithm of \cite{Duchi} with many
values of the regularization parameter $\lambda$. In \figref{fig:UCI}
we plot the results on three UCI datasets: StatLog, Pendigits and Isolet. The number
of classes for the datasets are 7,10,26, respectively. The original
dimensionality of these datasets is not very high and therefore,
following \cite{Duchi}, we expanded the features by taking all
products over ordered pairs of features. After this transformation,
the number of features were 630, 120, 190036, respectively. 

Fig.~\ref{fig:UCI} displays the results. As can be seen, ShareBoost
decreases the error much faster than the mixed-norm regularization,
and therefore is preferable when the goal is to have a rather sparse
solution. When more features are allowed, ShareBoost starts to
overfit. This is not surprising since here sparsity is our only mean
for controlling the complexity of the learned classifier. To prevent
this overfitting effect, one can use the variant of ShareBoost that
incorporates regularization---see \secref{sec:variants}. 

\begin{figure*}
\centering
\begin{tabular}{ccc}
\includegraphics[width=0.3\textwidth]{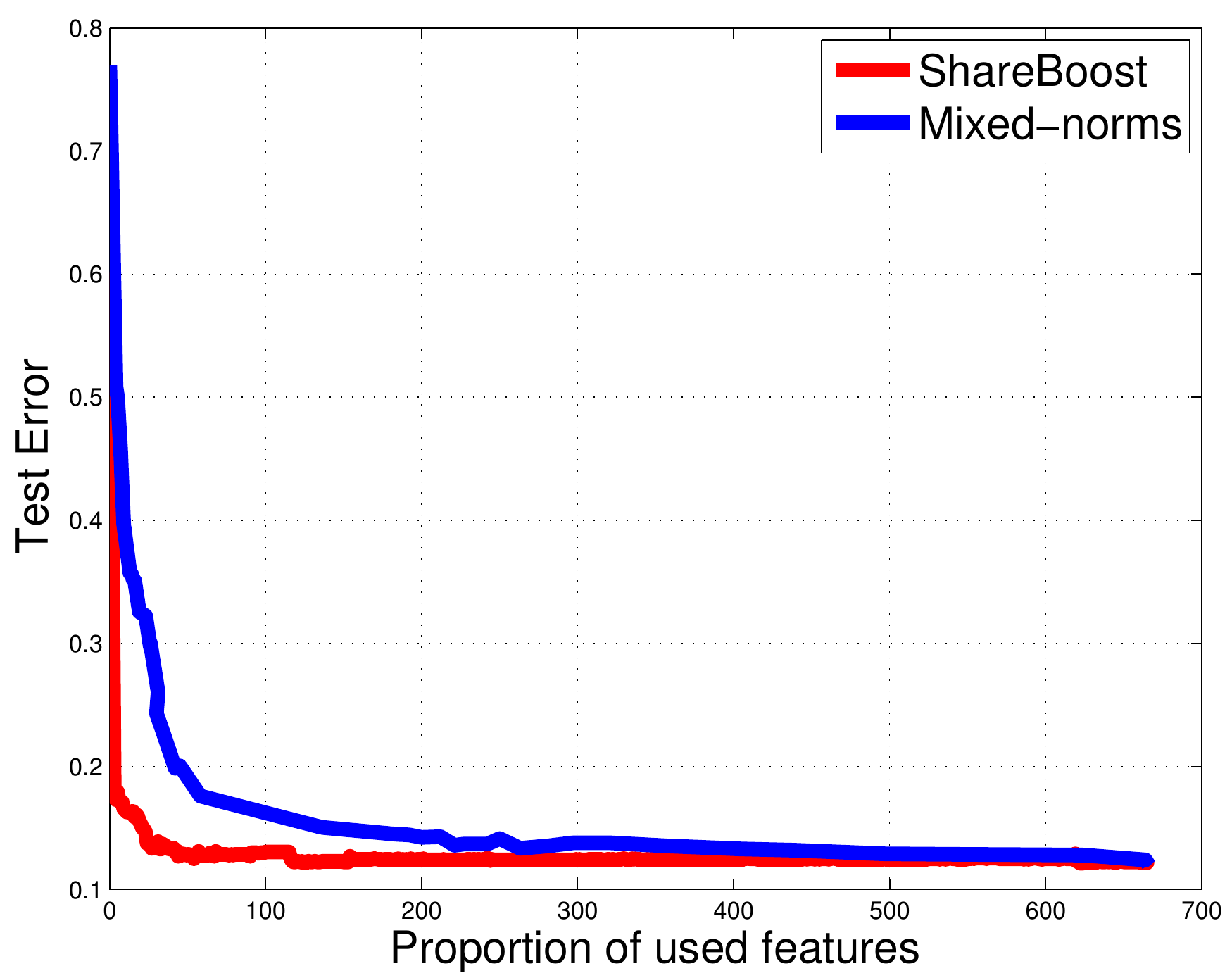}&
\includegraphics[width=0.3\textwidth]{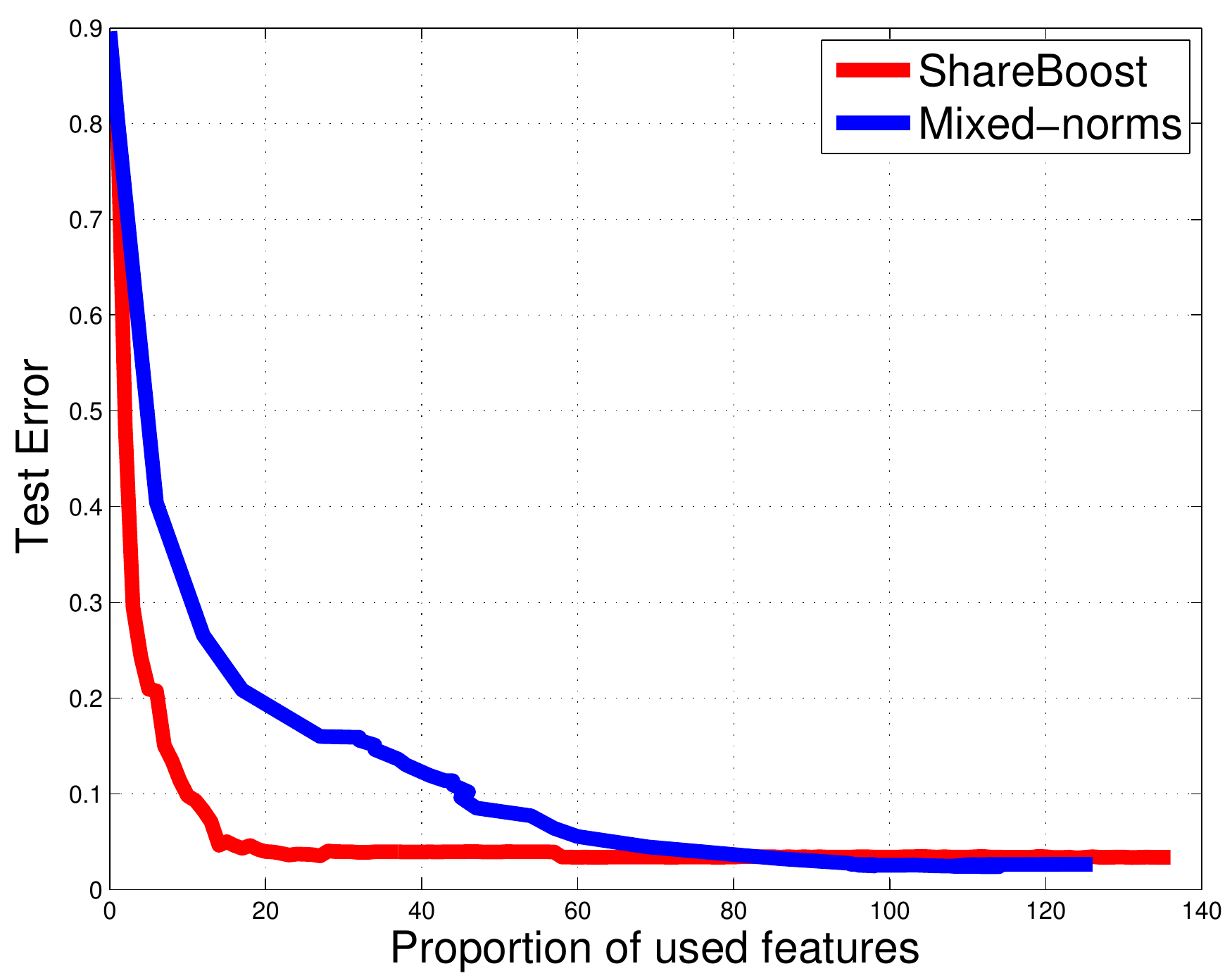} &
\includegraphics[width=0.3\textwidth]{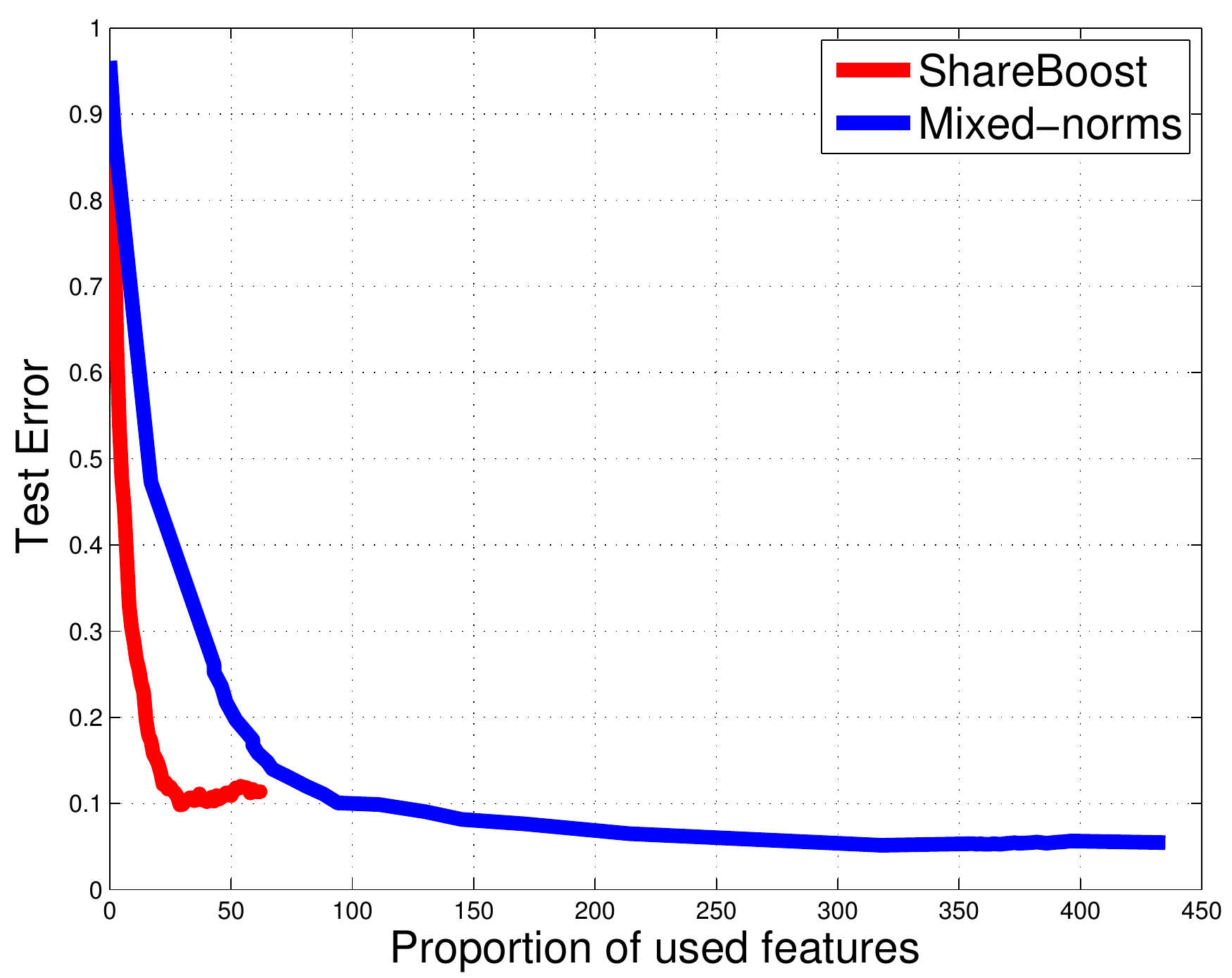}\\
\includegraphics[width=0.3\textwidth]{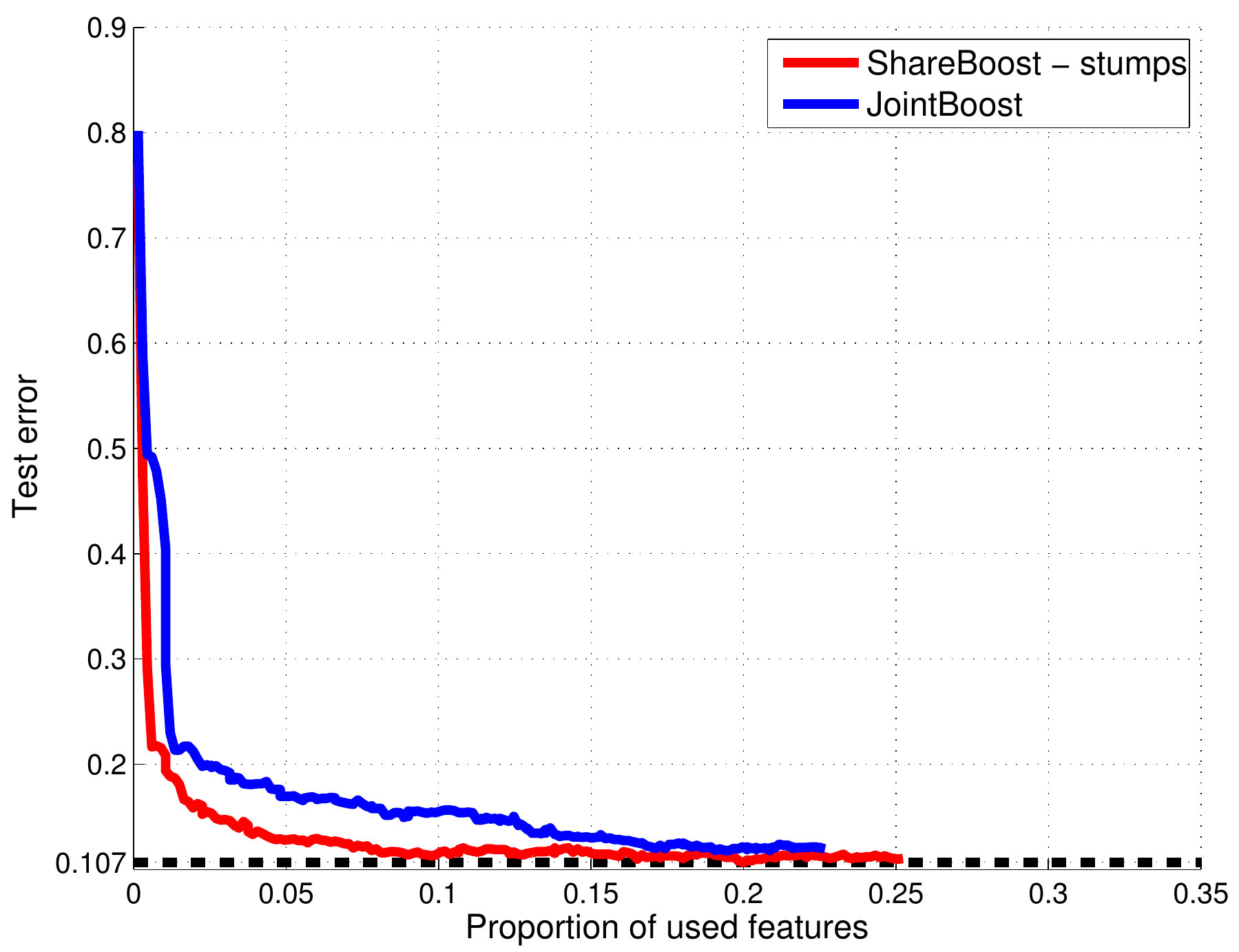}&
\includegraphics[width=0.3\textwidth]{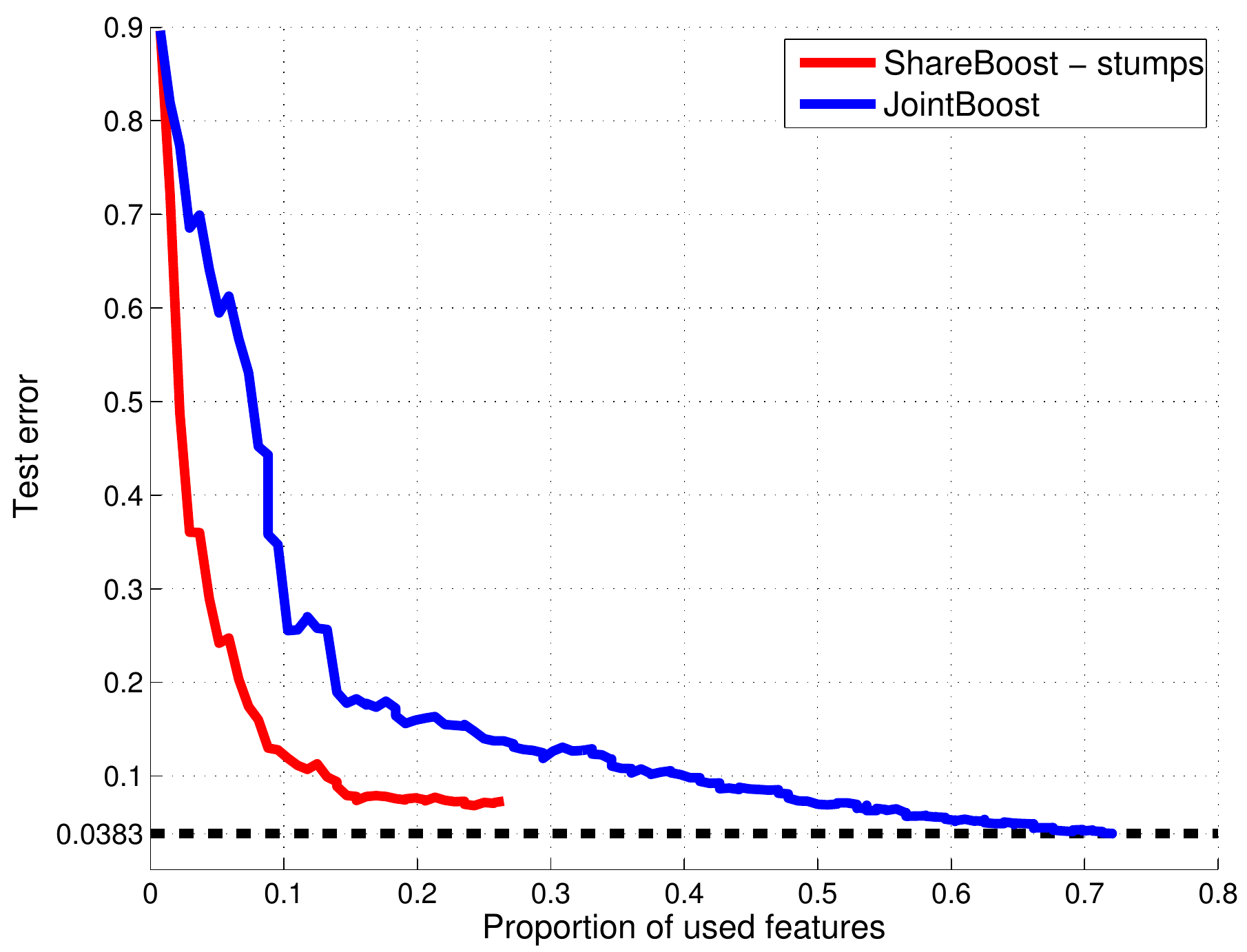} &
\includegraphics[width=0.3\textwidth]{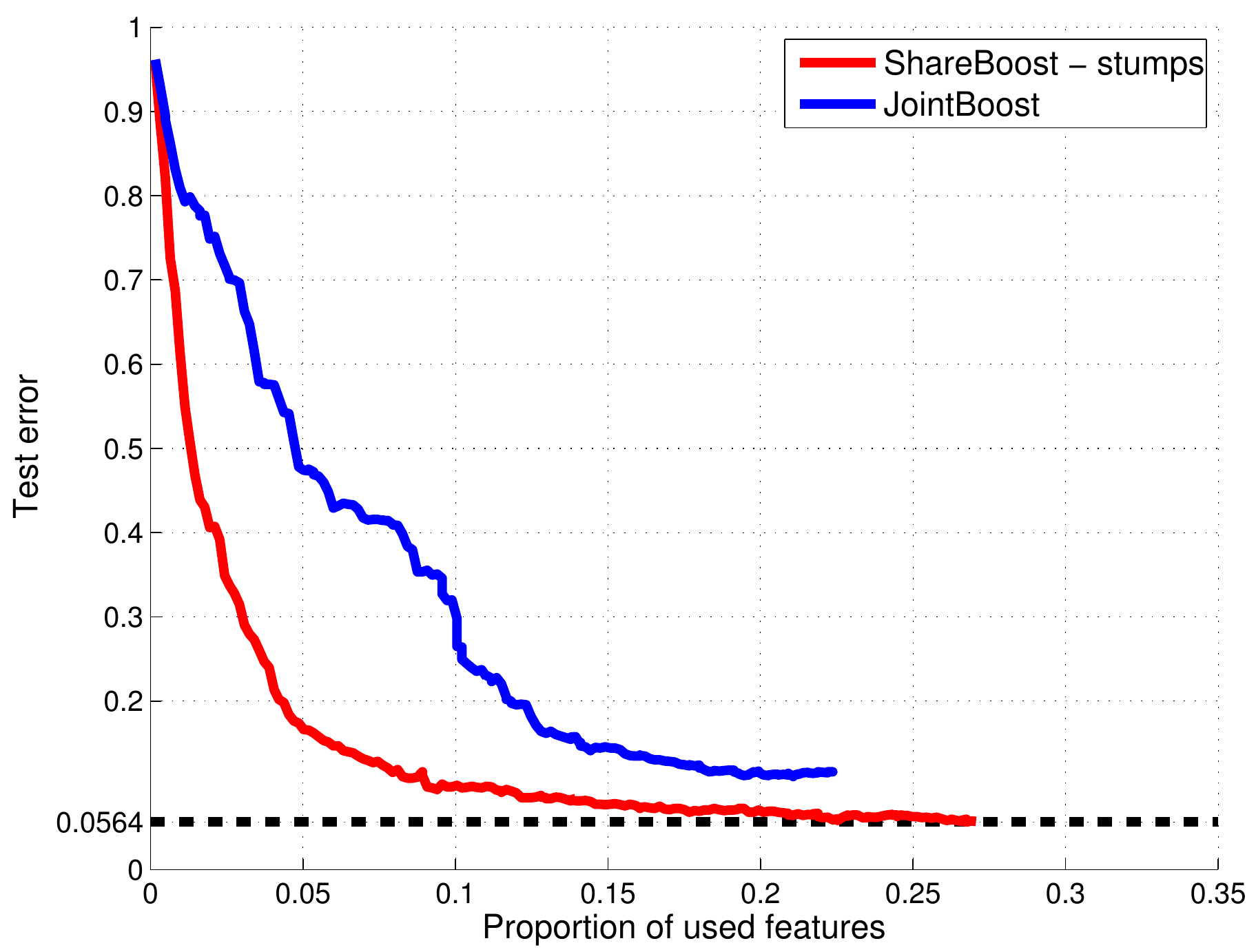}\\
(a) StatLog & (b) Pendigits & (c) Isolet
\end{tabular}
\caption{\small ShareBoost compared with mixed-norm regularization
  (top) and JointBoost (bottom) on several UCI
  datasets.  The horizontal axis is the feature sparsity (fraction of features
 used) and the vertical axis is the test error rate.}
\label{fig:UCI}
\end{figure*}

\subsection{Comparing ShareBoost to JointBoost} \label{sec:jointBoost}

Here we compare ShareBoost to the JointBoost algorithm of
\cite{TorralbaMuFr06}. See \secref{sec:related} for description of
JointBoost. As in the previous experiment, we followed the
experimental setup as in \cite{Duchi} and ran JointBoost of
\cite{TorralbaMuFr06} using their published code with additional
implementation of the BFS heuristic for pruning the $2^k$ space of all
class-subsets as described in their paper.

Fig.~\ref{fig:UCI} (bottom) displays the results. Here we used stump
features for both algorithms since these are needed for JointBoost.
As can be seen, ShareBoost decreases the error much faster and
therefore is preferable when the goal is to have a rather sparse
solution. As in the previous experiment we observe that when more
features are allowed, ShareBoost starts to overfit. Again, this is not
surprising and can be prevented by adding additional regularization.
The training runtime of ShareBoost is also much shorter than that of
JointBoost (see discussion in \secref{sec:related}).

\begin{table*}
\centering

  \begin{tabular}{|l|c|c|c|c|c|}
\hline
    Reference   & 3NN & Shape Context & SVM 9-poly & Neural Net & ShareBoost \\
                &         & Belongie-et-al & DeCosta-et-al & Ciresan-et-al & \\
\hline
    Error rate  & 2.7\% & 0.63\%  & 0.56\%      & 0.35\%        & 0.47\% \\
\hline
    Errors      & 270   &  63     & 56       &   35          & 47 \\
\hline
    Year        &  --   & 2002      & 2002    & 2010          & 2011 \\
\hline
    Run time    & $\times$ 14             & $\times$ 1000's & $\times$ 38 & $\times$ 2.5 & 1 \\
\hline
  \end{tabular}

\caption{\small Comparison of ShareBoost and relevant methods on error rate and computational complexity over the MNIST dataset. More details in the text. }
\label{tab:mnist-comparison}
\end{table*}

\subsection{MNIST Handwritten Digits Dataset}
\label{sec:mnist}

The goal of this experiment is to show that ShareBoost achieves
state-of-the-art performance while constructing very fast predictors.
We experimented with the MNIST digit dataset, which consists of a training set of $60,000$ digits
represented by centered size-normalized $28\times 28$ images, and a
test set of $10,000$ digits (see Fig.~\ref{fig:mnist-miss} for some
examples).  The MNIST dataset has been extensively studied and is
considered the standard test for multiclass classification of
handwritten digits. The error rate achieved by the most advanced
algorithms are below $1\%$ of the test set (i.e., below $100$
classification mistakes on the test set). To get a sense of the
challenge involved with the MNIST dataset, consider a straightforward
3-Nearest-Neighbor (3NN) approach where each test example $\x$,
represented as a vector with $28^2$ entries, is matched against the
entire training set $\x_j$ using the distance
$d(\x,\x_j)=\|\x-\x_j\|^2$. The classification decision is then
the majority class label of the three most nearest training
examples. This naive 3NN approach achieves an error rate of $2.67\%$
(i.e., $267$ mis-classification errors) with a run-time of unwieldy
proportions. Going from 3NN to qNN with $q=4,...,12$ does not produce
a better error rate.

More advanced shape-similarity measures could improve the performance of the naive $qNN$ approach but at a heavier run-time cost. For example, the Shape Context similarity measure introduced by
\citep{Belongie2002} uses 
a Bipartite matching algorithm between descriptors computed along
 $100$ points in each image. A 3NN using Shape-Context similarity
 achieves an error rate of $0.63\%$ but at a very high (practically
 unwieldy) run-time cost. The challenge with the MNIST dataset is,
 therefore,  to design a multiclass algorithm with a small error rate
 (say below $1\%$) {\it and\/} have an efficient run-time performance.

The top MNIST performer \cite{Ciresan2010} uses a feed-forward Neural-Net with $7.6$ million connections which roughly translates to $7.6$ million multiply-accumulate (MAC) operations at run-time as well. During training, geometrically distorted versions of the original examples were generated  in order to expand the training set following \cite{Simard} who introduced a warping scheme for that purpose. The top performance error rate stands at $0.35\%$ at a run-time cost of $7.6$ million MAC per test example.

Table ~\ref{tab:mnist-comparison} summarizes the discussion so far including the performance of ShareBoost. The error-rate of ShareBoost with $266$ rounds stands on $0.71\%$ using the original training set and $0.47\%$ with the expanded training set of $360,000$ examples generated by adding five deformed instances per original example and with $T=305$ rounds. The run-time on test examples is around $40\%$ of the leading MNIST performer. The error rate of $0.47\%$ is better than  that reported by \cite{Decoste2002} who used a 1-vs-all SVM with a 9-degree polynomial kernel and with an expanded training set of $780,000$ examples. The number of support vectors (accumulated over the ten separate binary classifiers) was $163,410$ giving rise to a run-time of $21$-fold compared to ShareBoost.
We describe below the details of the ShareBoost implementation on the MNIST dataset.

The feature space we designed consists of 7$\times$7 image patches
with corresponding spatial masks, constructed as follows.  All
7$\times$7 patches were collected from all images and clustered using
K-means to produce $1000$ centers $w_f$. For each such center (patch) we also associated a set of $16$ possible masks $g_f$ in order to limit the spatial locations of the maximal response of the $7\times 7$ patch.  The pairs
$F=\{(v_f, g_f)\}$ form the pool of $d=16,000$ templates (shape plus
location).
 The vector of feature measurements $\x\in
R^m=\left(\ldots, x_{fc}, \ldots\right)$ has each of its entries
associated with one of the templates where an entry $x_{fc}=
\max\left\{(I\otimes w_f)\times g_f^c\right\}$.  That is, a feature is
the maximal response of the convolution of the template $w_f$ over the
image, weighted by the Gaussian $g_f^c$.

ShareBoost selects a subset of the templates $j_1, \ldots, j_T$ where
each $j_i$ represents some template pair $(w_{f_i}, g_{f_i}^{c_i})$,
and the matrix $W\in R^{k\times T}$.  A test image $I$ is then
converted to $\tilde \x\in R^T$ using $\tilde x_i = \max\{(I\otimes
w_{f_i})\times g_{f_i}^{c_i}\}$ with the maximum going over the image
locations. The prediction $\hat y$ is then $\argmax_{y \in [k]}
(W\tilde\x)_y$.  Fig.~\ref{fig:mnist-features}(a) shows the first 30
templates that were chosen by ShareBoost and their corresponding
spatial masks.  For example, the first templates matches a digit part
along the top of the image, the eleventh template matches a horizontal
stroke near the top of the image and so forth.  Fig.~\ref{fig:mnist-features}(b)
shows the weights (columns of $W$) of the first 30 templates of the
model that produced the best results. For example, the eleventh
template which encodes a horizontal line close to the top is expected
in the digit ``9'' but not in the digit
``4''. Fig.~\ref{fig:mnist-miss} shows the 47 misclassified samples
after $T=305$ rounds of ShareBoost, and Fig.~\ref{fig:mnist-conv} displays the convergence curve of error-rate as a function of the number of rounds.

In terms of run-time on a test image, the system requires $305$
convolutions of $7\times 7$ templates and $540$ dot-product operations
which totals to roughly $3.3\cdot 10^6$ MAC operations --- compared to
around $7.5\cdot 10^6$ MAC operations of the top MNIST performer.
Moreover, due to the fast convergence of ShareBoost, 75 rounds are
enough for achieving less than $1\%$ error.  Further improvements of
ShareBoost on the MNIST dataset are possible such as by extending
further the training set using more deformations and by increasing the
pool of features with other type of descriptors -- but those were not
pursued here. The point we desired to make is that ShareBoost can
achieve competitive performance with the top MNIST performers, both in
accuracy and in run-time, with little effort in feature space design
while exhibiting great efficiency during training time as well.

\begin{figure}
\centering
\begin{tabular}{cc}
\includegraphics[width=0.22\textwidth]{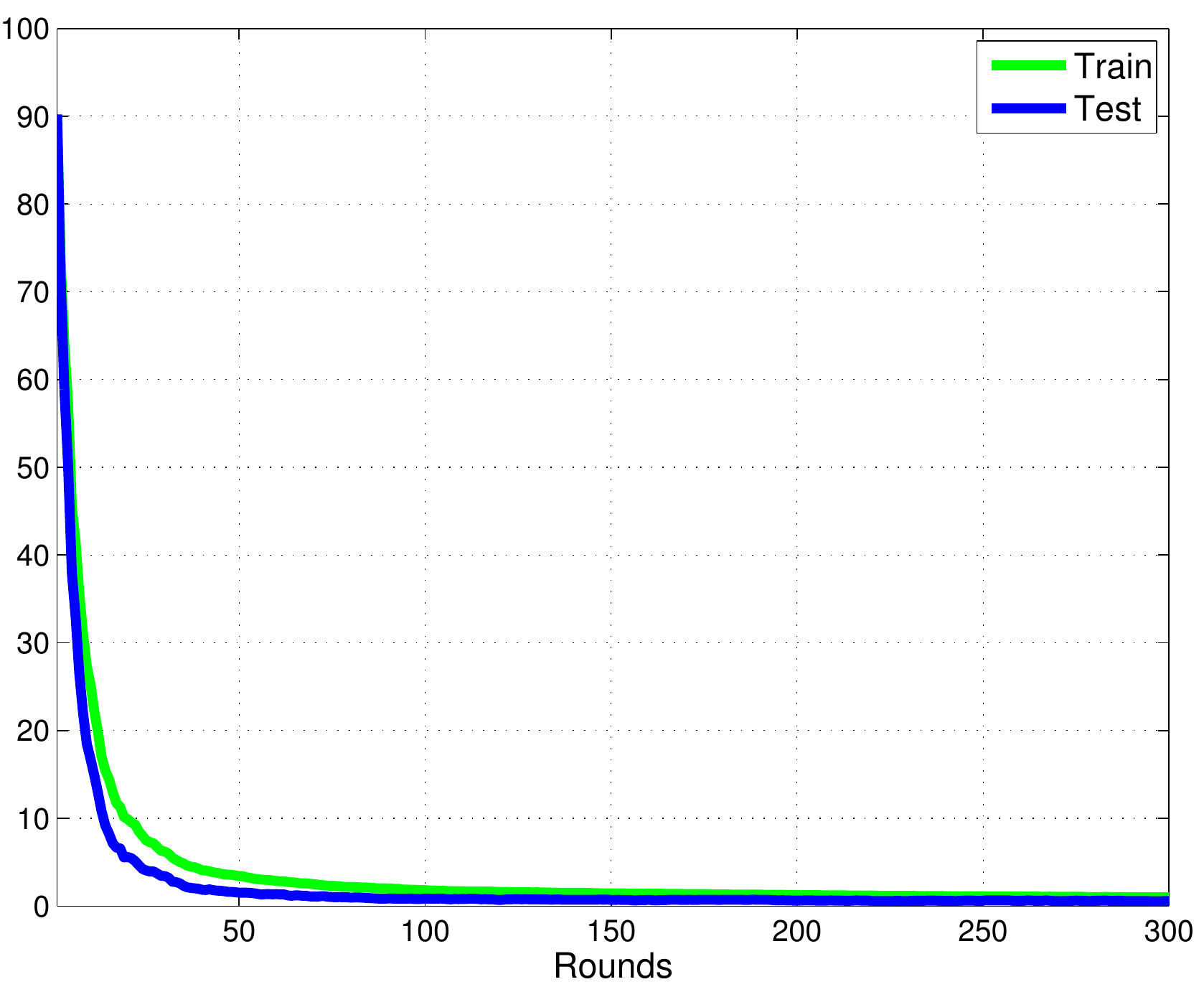}&
\includegraphics[width=0.22\textwidth]{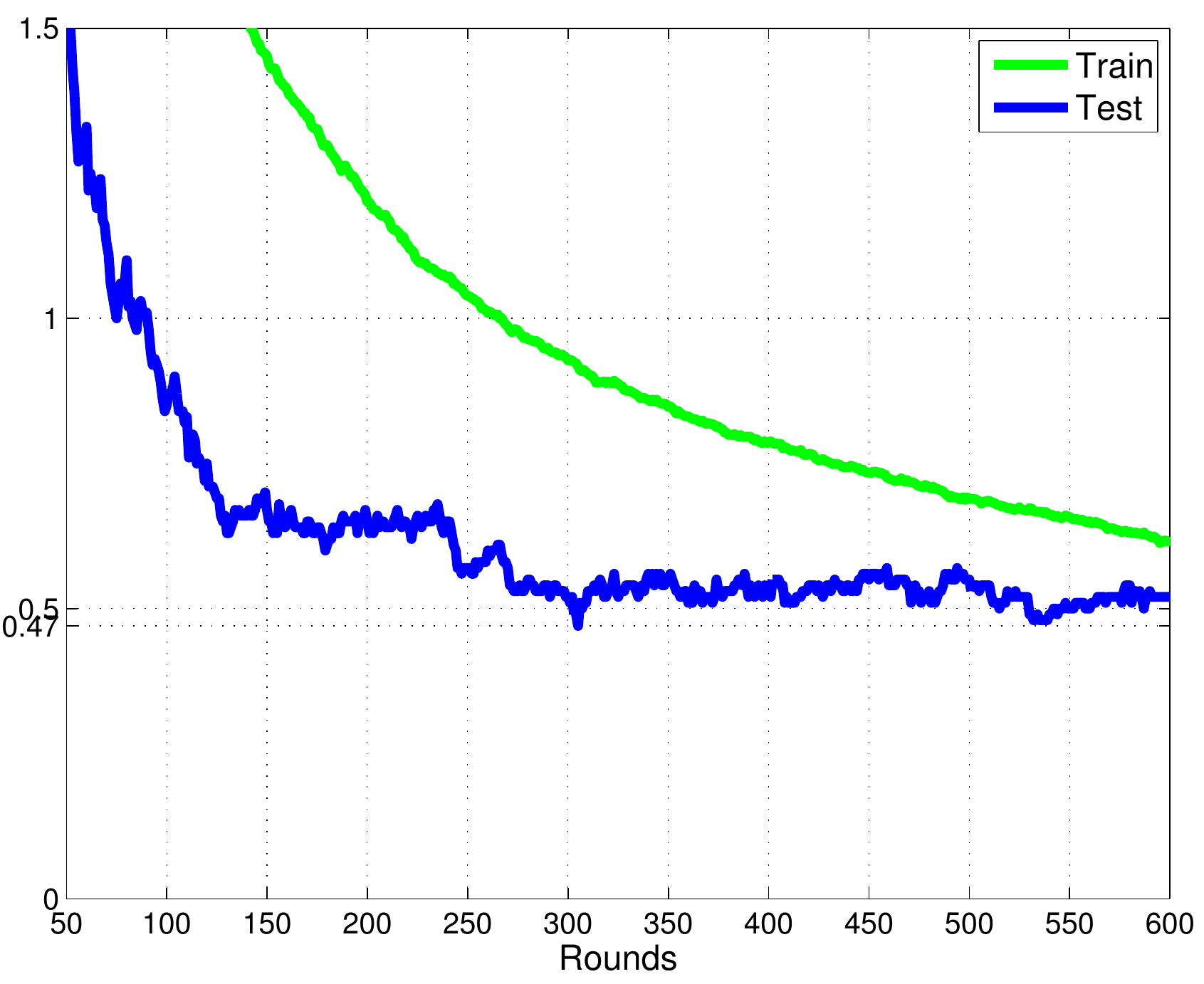}
\end{tabular}
\caption{\small Convergence of Shareboost on the MNIST dataset as it
  reaches 47 errors.  The set was expanded with 5 deformed versions of
  each input, using the method in \cite{Simard}. Since the
  deformations are fairly strong, the training error is higher than
  the test.  Zoomed in version shown on the right.  }
\label{fig:mnist-conv}
\end{figure}

\begin{figure*}
\centering
\begin{tabular}{cc}
\includegraphics[width=0.45\textwidth]{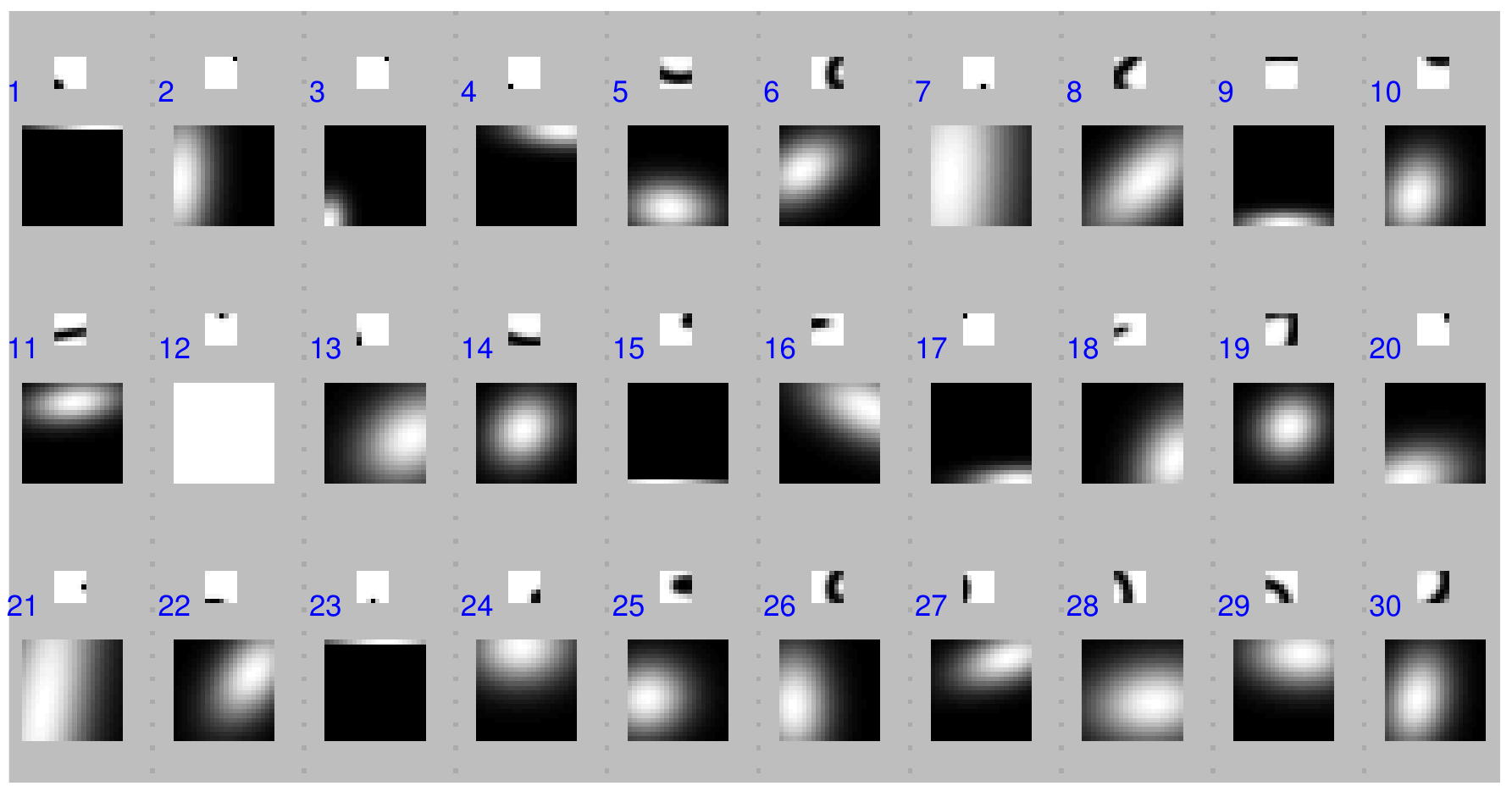}&
\includegraphics[width=0.45\textwidth]{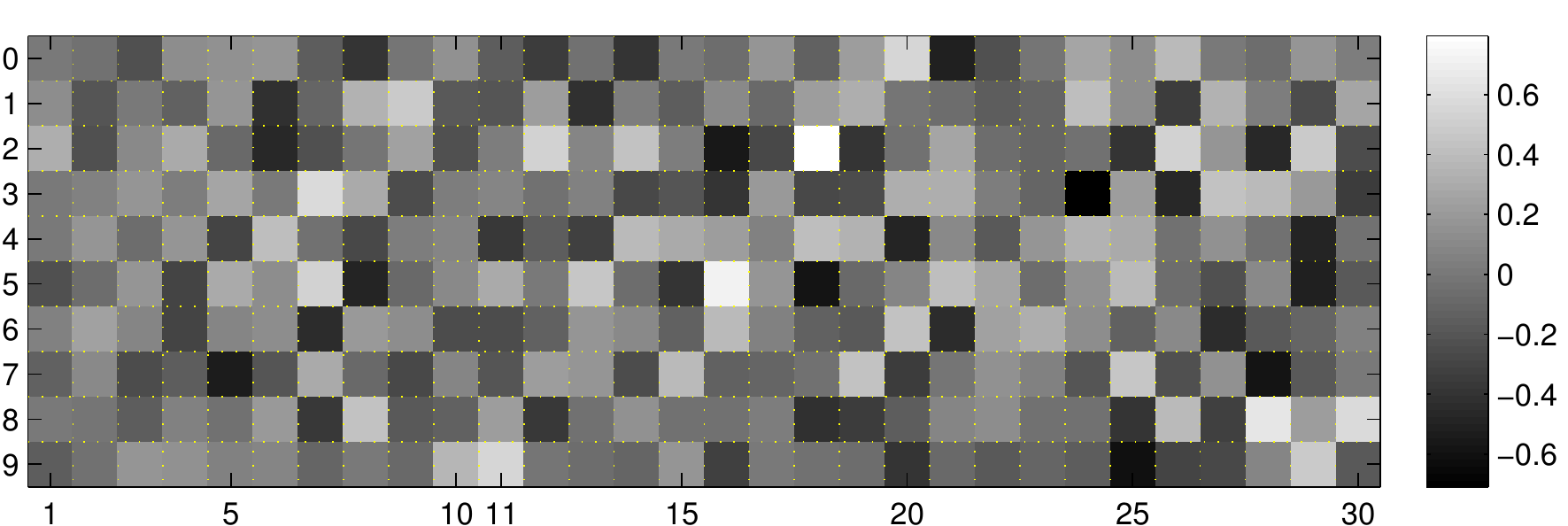}\\
(a) Leading 30 selected features & (b) Corresponding columns of $W$
\end{tabular}
\caption{\small (a) The first 30 selected features for the MNIST
  dataset. Each feature is composed of a 7$\times$7 template and a position
  mask. (b) The corresponding columns of $W$. The entries of a column represents the "sharing" among classes pattern. For example, the eleventh
template which encodes a horizontal line close to the top is expected
in the digits ``9,8,5'' but not in digit
``4''.}
\label{fig:mnist-features}
\end{figure*}

\begin{figure}
\centering
\includegraphics[width=0.45\textwidth]{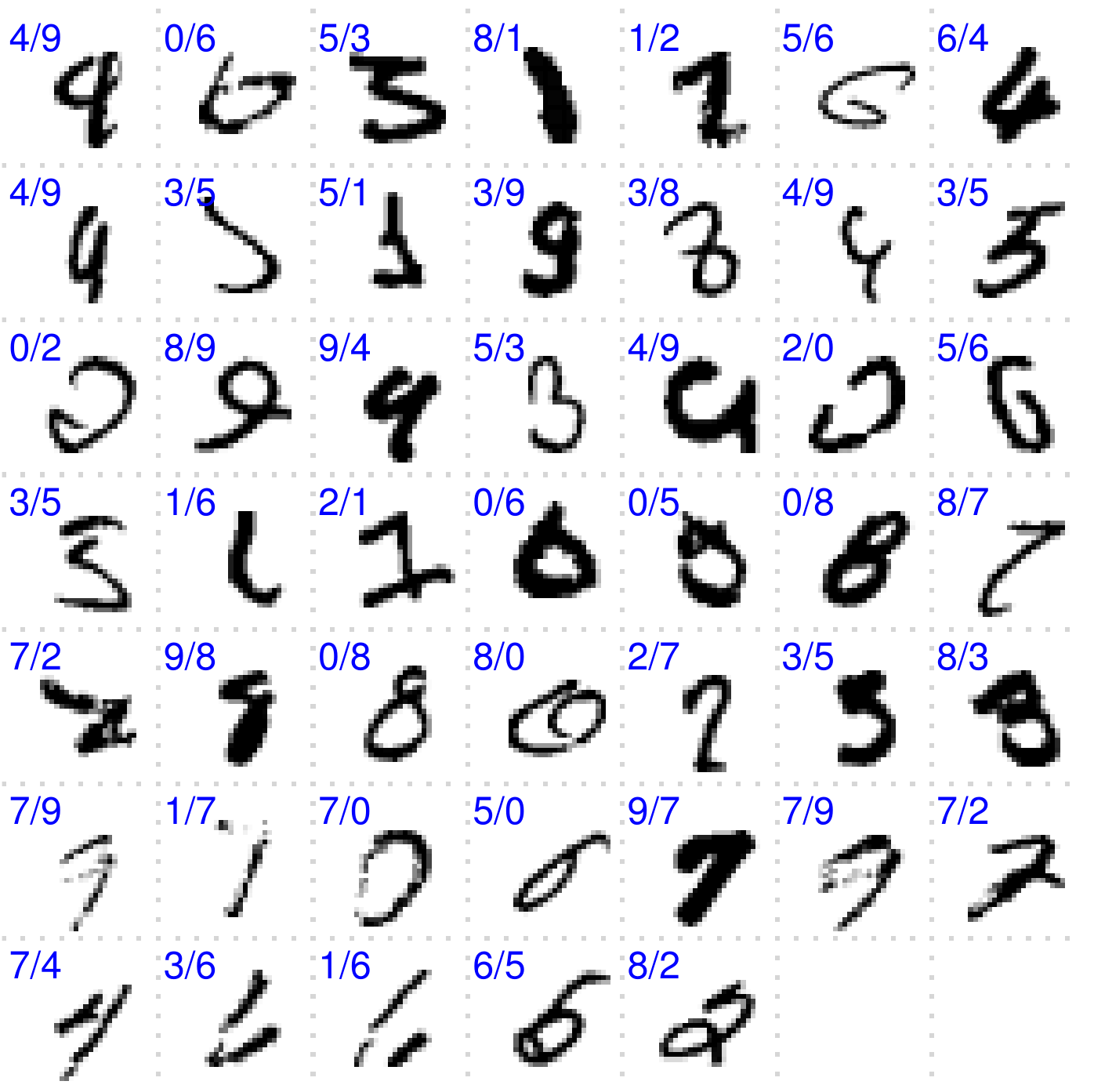}
\caption{\small ShareBoost achieves an error of $0.47\%$ on the test set which translates to $47$ mistakes displayed above. Each error test example is displayed together with its predicted and True labels.}
\label{fig:mnist-miss}
\end{figure}

\subsection{Comparing ShareBoost to kernel-based SVM} \label{sec:mnistPCA}

In the experiments on the MNIST data set reported above,  each
feature is the maximal response of the convolution of a $7 \times 7$ patch
over the image, weighted by a spatial mask.

One might wonder if the stellar performance of ShareBoost is maybe due
to the patch-based features we designed. In this section we remove
doubt by using
ShareBoost for training a piece-wise linear predictor, as described in
Section~\ref{sec:supervector}, on MNIST using generic features.  We
show that ShareBoost comes close to the error rate of SVM with
Gaussian kernels, while only requiring $230$ anchor points, which is
well below the number of support-vectors needed by kernel-SVM. This
underscores the point that ShareBoost can find an extremely fast
predictor without sacrificing state-of-the-art performance level.  

\begin{figure}
\includegraphics[width=0.5\textwidth]{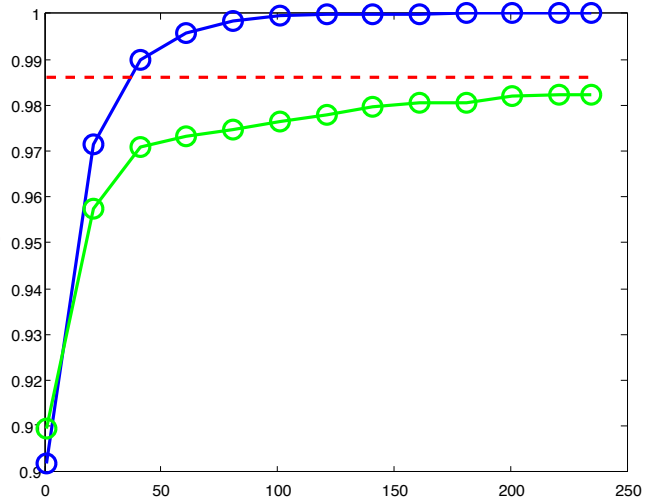}
\caption{\small Test accuracy of ShareBoost on the MNIST
  dataset as a function of the number of rounds using the generic
  piece-wise linear construction. Blue: train accuracy. Red: test
  accuracy. Dashed: SVM with Gaussian kernel accuracy.} \vspace{-0.5cm}
\label{fig:mnist-super}
\end{figure}

Recall that the piece-wise linear predictor is of the following
form: 
{\small 
\[h(\x)= \argmax_{y \in \Y} \left( \sum_{j\in {\cal I}}
  \indct{\|\x-\v^{(j)}\| < r^{(j)}}\,(W_{y,\cdot}^{(j)}\x +
b_{y}^{(j)})\right),\]} 
where $\v^{(j)}\in \reals^d$ are anchor points
with radius of influence $r^{(j)}$, and $W^{(j)},b^{(j)}$ define
together a linear classifier for the $j$'th anchor. ShareBoost selects
the set of anchor points and their radiuses together with the
corresponding linear classifiers.  In this context it is worthwhile to
compare classification performance to SVM with Gaussian kernels applied in
a 1-vs-all framework. Kernel-SVM also selects a subset of the training
set $S$ with corresponding weight coefficients, thus from a
mechanistic point of view our piece-wise linear predictor shares the
same principles as kernel-SVM.

We performed a standard dimensionality reduction using PCA from the
original raw pixel dimension of $28^2$ to $50$, i.e., every digit was
mapped to $\x\in R^{50}$ using PCA. The pool of anchor points was
taken from a reduced training set by means of clustering $S$ into 1500
clusters and the range of radius values per anchor point was taken
from a discrete set of 35 values. Taken together, each round of
ShareBoost selected an anchor point $\v^{(j)}$ and radius $r^{(j)}$
from a search space of size $52500$. Fig.~\ref{fig:mnist-super} shows
the error-rate per ShareBoost rounds. As can be seen, ShareBoost comes
close to the error rate of SVM while only requiring $230$ anchor
points, which is well below the number of support-vectors needed by
kernel-SVM. This underscores the point that ShareBoost can find an
extremely fast predictor without sacrificing state-of-the-art
performance level.

\section{Acknowledgements}

We would like to thank Itay Erlich and Zohar Bar-Yehuda for their dedicated contribution to the implementation of ShareBoost.

\bibliography{curRefs}
\bibliographystyle{icml2011}

\newpage
\onecolumn

\appendix

\section{Proofs}

\subsection{Proof of \thmref{thm:generalization}}
The proof is based on an analysis of the Natarajan dimension of the
class of matrices with small number of non-zero columns. The Natarajan
dimension is a generalization of the VC dimension for classes of
multiclass hypotheses. In particular, we rely on the analysis given in
Theorem 25 and Equation 6 of \cite{DanielySaBeSh11}. This implies that
if the set of $T$ columns of $W$ are chosen in advance then
\begin{align*}
\prob_{(\x,y) \sim \D} & \left[h_W(\x) \neq y \right] ~\le~
\prob_{(\x,y) \sim S} \left[h_W(\x) \neq y \right]  + 
O\left( \sqrt{Tk\log(Tk)\log(k) + \log(1/\delta)}/\sqrt{|S|}  \right) ~.
\end{align*}
Applying the union bound over all $T \choose d$ options to choose the
relevant features we conclude our proof.

\subsection{Proof of \thmref{thm:sparse}}
To prove the theorem, we start by establishing a certain smoothness property of $L$. First, we need the following.
\begin{lemma} \label{lem:smooth1}.
 Let $\ell : \reals^k \to \reals$ be defined as
 \[
\ell(\v) = \log\left(1 + \sum_{i \in [k] \setminus \{j\}} e^{1-v_j+v_i}\right) ~.
 \] Then, for any $\u,\v$ we have
 \[
\ell(\u+\v) \le \ell(\u) + \inner{\nabla \ell(\u),\v} +  \|\v\|_\infty^2 ~.
 \]
\end{lemma}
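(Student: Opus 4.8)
The plan is to reduce to a one-dimensional second-order Taylor estimate and then observe that the second derivative of $\ell$ along any line is a \emph{variance}, which is exactly what makes it controllable by $\|\v\|_\infty$.

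First I would fix $\u,\v$ and set $\phi(t) = \ell(\u + t\v)$ for $t \in [0,1]$. Taylor's theorem with integral remainder gives $\phi(1) = \phi(0) + \phi'(0) + \int_0^1 (1-t)\,\phi''(t)\,dt$, and since $\phi(0) = \ell(\u)$, $\phi(1) = \ell(\u+\v)$, and $\phi'(0) = \inner{\nabla\ell(\u),\v}$, it suffices to prove a uniform bound $\phi''(t) \le 2\|\v\|_\infty^2$ on $[0,1]$. In fact I expect to obtain $\phi''(t)\le\|\v\|_\infty^2$, hence the even stronger remainder $\tfrac12\|\v\|_\infty^2$; the constant stated in the lemma is looser than necessary but is all that is used in the proof of \thmref{thm:sparse}.

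Next I would compute $\phi''$ explicitly. Writing $a_i(t) = (1 - u_j + u_i) + t(v_i - v_j)$ for $i \neq j$, we have $\phi(t) = \log\bigl(1 + \sum_{i\neq j} e^{a_i(t)}\bigr)$; differentiating twice (using that each $a_i$ is affine in $t$) and letting $q_i(t) = e^{a_i(t)}/\bigl(1 + \sum_{i'\neq j} e^{a_{i'}(t)}\bigr)$ for $i \neq j$ together with $q_j(t) = 1/\bigl(1 + \sum_{i'\neq j} e^{a_{i'}(t)}\bigr)$ (so that $(q_1(t),\dots,q_k(t))$ is a probability vector), one gets $\phi''(t) = \sum_{i\neq j} q_i(t)(v_i-v_j)^2 - \bigl(\sum_{i\neq j} q_i(t)(v_i-v_j)\bigr)^2$. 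Introducing $b \in \reals^k$ with $b_i = v_i - v_j$ for $i\neq j$ and $b_j = 0$ (the $j$-th coordinate contributes nothing to either sum), this reads $\phi''(t) = \mathrm{Var}_{i\sim q(t)}(b_i) \ge 0$.

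The only real idea is the last step: variance is unchanged if one adds a constant to $b$, so adding $v_j$ turns $b$ into the vector whose $i$-th entry is $v_i - v_j + v_j = v_i$ for $i\neq j$ and $0 + v_j = v_j$ for $i = j$ --- i.e.\ exactly $\v$. Hence $\phi''(t) = \mathrm{Var}_{i\sim q(t)}(v_i) \le \E_{i\sim q(t)}[v_i^2] \le \|\v\|_\infty^2 \sum_i q_i(t) = \|\v\|_\infty^2$, uniformly in $t$ (the bound does not see the weights $q(t)$), and plugging this into the Taylor identity finishes the proof. The main obstacle is precisely getting this centering right: the lazy estimate ``the range of $b$ is at most $4\|\v\|_\infty$, so $\mathrm{Var}(b) \le (2\|\v\|_\infty)^2$'' loses a factor of $4$, whereas recognizing that $b$ shifted by $v_j\mathbf 1$ is literally $\v$ collapses the constant; everything else is routine calculus.
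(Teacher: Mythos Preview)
Your proof is correct. Both you and the paper argue via Taylor's theorem and reduce to bounding the quadratic form $\v^\trans H\v$, so the skeleton is the same, but the executions differ. The paper computes all entries of the Hessian $H$ explicitly (in terms of the weights $\alpha_i$), rewrites $\v^\trans H\v$ as $-\inner{\alpha,\v}^2$ plus a remainder, drops the nonpositive term $-\inner{\alpha,\v}^2$, and then bounds the rest by $\sum_{i>1}\alpha_i\,((v_i-v_1)^2-2v_1^2)\le 2\|\v\|_\infty^2$ via a one-variable maximization. Your route is cleaner: by staying on the line $t\mapsto \u+t\v$ you recognize $\phi''(t)$ directly as a variance $\rV_{q(t)}(b)$, and the shift $b\mapsto b+v_j\mathbf{1}=\v$ immediately gives $\phi''(t)\le \rE_{q(t)}[v_i^2]\le \|\v\|_\infty^2$. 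This avoids the entrywise Hessian bookkeeping and, because you keep the squared-mean term that the paper discards, it yields the sharper remainder $\tfrac12\|\v\|_\infty^2$ rather than $\|\v\|_\infty^2$. Either constant suffices for the downstream use in \thmref{thm:sparse}.
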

\begin{proof}
Using Taylor's theorem, it suffices to show that the Hessian of $\ell$ at any point satisfies
\[
\v^\trans H \v \le 2\|\v\|_\infty^2 ~.
\]
Consider some vector $\w$ and without loss of generality assume that $j=1$.
We have,
\[
\frac{\partial \ell(\w)}{\partial w_1} = - \frac{\sum_{i=2}^k e^{1-w_1+w_i}}{1+\sum_{p=2}^k e^{1-w_1+w_p}} \eqdef \alpha_1
\]
and for $i \ge 2$
 \[
\frac{\partial \ell(\w)}{\partial w_i} =  \frac{e^{1-w_1+w_i}}{1+\sum_{p=2}^k e^{1-w_1+w_p}} \eqdef \alpha_i ~.
\]
Note that $-\alpha_1 = \sum_{i=2}^k \alpha_1 \le 1$, and that for $i \ge 2$,  $\alpha_i \ge 0$. Let $H$ be the Hessian of $\ell$ at $\w$. It follows that for $i \ge 2$,
{\small \[
H_{i,i} = \frac{e^{1-w_1+w_i}}{1+\sum_{p=2}^k e^{1-w_1+w_i}}  - \frac{(e^{1-w_1+w_i})^2}{1+(\sum_{p=2}^k e^{1-w_1+w_i})^2} = \alpha_i - \alpha_i^2 ~.
\]}
In addition, for $j \neq i$ where both $j$ and $i$ are not $1$ we have
\[
H_{i,j} = \frac{0 - e^{1-w_1+w_i}e^{1-w_1+w_j}}{(\sum_{p=2}^k e^{1-w_1+w_i})^2} =
- \alpha_i \alpha_j ~.
\]
For $i=1$ we have
\[
H_{1,1} = -\alpha_1 - \alpha_1^2
\]
and for $i > 1$
\[
H_{i,1} = -\alpha_i - \alpha_1 \alpha_i
\]
We can therefore rewrite $H$ as
{\small \bea
H &=& - \alpha \alpha^\trans + \textrm{diag}([-\alpha_1, \alpha_2 ,\ldots, \alpha_k])  - \e_1 [0,\alpha_1,\ldots,\alpha_k]\\
&& - [0,\alpha_1,\ldots,\alpha_k]^\trans (\e_1)^\trans ~.
\eea}
It thus follows that:
\begin{align*}
\v^\trans H \v &= -(\inner{\alpha,\v})^2 - \alpha_1 v_1^2+ \sum_{i>1} \alpha_i v_i^2 - 2 v_1 \sum_{i>1} \alpha_i v_i \\
&\le 0 + \sum_{i>1} \alpha_i (v_i^2 - v_1^2 - 2 v_1 v_i)\\
&= \sum_{i>1} \alpha_i ((v_i-v_1)^2 -2 v_1^2 )\\
&\le  2\max_i \v_i^2 = 2 \|\v\|_\infty^2 ~,
\end{align*}
where the last step is because for any $v_i \in [-c,c]$, the function $f(v_1) = (v_i-v_1)^2 - 2v_1^2$ receives its maximum when $v_1 = -v_i$ and then its value is $2v_i^2$.
This concludes our proof.
\end{proof}

The above lemma implies that $L$ is smooth in the following sense:
\begin{lemma} \label{lem:smooth}
For any $W,U$ s.t. $U = \u \,\e_r^\trans$ (that is, only the $r$'th column of $U$ is not zero) we have that
\[
L(W-U) \le L(W) - \inner{\nabla L(W),U} +  \|\u\|_\infty^2 ~.
\]
\end{lemma}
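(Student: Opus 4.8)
The plan is to deduce Lemma~\ref{lem:smooth} from Lemma~\ref{lem:smooth1} by unwinding the definition of $L$ and applying the one-vector bound to each summand. First I would recall that $L(W) = \frac{1}{m}\sum_{(\x,y)\in S}\ell(W,(\x,y))$, where $\ell(W,(\x,y)) = \log\bigl(1 + \sum_{y'\neq y} e^{\indct{y'\neq y} - (W\x)_y + (W\x)_{y'}}\bigr)$. The key observation is that for a fixed example $(\x,y)$, the map $W \mapsto \ell(W,(\x,y))$ is exactly the function $\ell(\cdot)$ of Lemma~\ref{lem:smooth1} (with $j = y$) composed with the linear map $W \mapsto W\x$; that is, $\ell(W,(\x,y)) = \ell_{\mathrm{Lem}}(W\x)$ where $\ell_{\mathrm{Lem}}$ is as defined in Lemma~\ref{lem:smooth1}.

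Next I would evaluate the change when we subtract $U = \u\,\e_r^\trans$. Since $U\x = \u\, (\e_r^\trans \x) = x_r\,\u$, we get $\ell((W-U)\x, \cdot) = \ell_{\mathrm{Lem}}(W\x - x_r\u)$. Applying Lemma~\ref{lem:smooth1} with the perturbation vector $\v = -x_r\u$ yields
\[
\ell_{\mathrm{Lem}}(W\x - x_r\u) \le \ell_{\mathrm{Lem}}(W\x) - x_r\inner{\nabla\ell_{\mathrm{Lem}}(W\x),\u} + x_r^2\|\u\|_\infty^2 ~.
\]
Because each feature satisfies $x_r \in [-1,1]$, we have $x_r^2 \le 1$, so the quadratic term is bounded by $\|\u\|_\infty^2$ for every example. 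Averaging over the training set, the second-order terms stay bounded by $\|\u\|_\infty^2$, and it remains to identify the first-order term.

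For the linear term I would use the chain rule: $\inner{\nabla L(W), U} = \frac{1}{m}\sum_{(\x,y)} x_r \inner{\nabla\ell_{\mathrm{Lem}}(W\x), \u}$, which follows since $U$ has a nonzero entry only in column $r$ and $\frac{\partial \ell(W,(\x,y))}{\partial W_{q,r}} = x_r\,(\nabla\ell_{\mathrm{Lem}}(W\x))_q$ (this matches the explicit gradient computation preceding Algorithm~\ref{algo:FGDc}). Summing the per-example inequalities and dividing by $m$ then gives exactly $L(W-U) \le L(W) - \inner{\nabla L(W),U} + \|\u\|_\infty^2$. The only mild subtlety — and the place to be slightly careful rather than the real obstacle — is the bookkeeping that $\langle\nabla\ell_{\mathrm{Lem}}(W\x),-x_r\u\rangle$ summed over examples equals $-\langle\nabla L(W),U\rangle$; everything else is a direct application of Lemma~\ref{lem:smooth1} together with $|x_r|\le 1$. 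Since Lemma~\ref{lem:smooth1} has already done the hard convexity/Hessian work, this lemma is essentially a one-line corollary once the composition with the linear map $W\mapsto W\x$ is spelled out.
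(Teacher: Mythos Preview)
Your proposal is correct and follows essentially the same route as the paper's proof: apply Lemma~\ref{lem:smooth1} per example via the composition $W\mapsto W\x$, use $U\x = x_r\u$ and $|x_r|\le 1$ to bound the quadratic term by $\|\u\|_\infty^2$, then average over $S$ and identify the linear term with $\inner{\nabla L(W),U}$ by the chain rule. If anything, you are slightly more explicit than the paper about the gradient bookkeeping.
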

\begin{proof}
Recall that $L(W)$ is the average over $(\x,y)$ of a function of the form $\ell(W\x)$, where $\ell$ is as defined in \lemref{lem:smooth1}. Therefore,
\begin{align*}
\ell((W+U)\x) &\le \ell(W\x) + \inner{\nabla \ell(W\x),U\x} +  \|U\x\|_\infty^2 \\
&= \ell(W\x) + \inner{\nabla \ell(W\x),U\x} + |x_r|^2 \|\u\|_\infty^2 \\
&\le \ell(W\x) + \inner{\nabla \ell(W\x),U\x} +  \|\u\|_\infty^2 ~,
\end{align*}
where the last inequality is because we assume that $\|\x\|_\infty \le 1$ for all $\x$.
The above implies that
\begin{equation} \label{eqn:smoothproof1}
L(W-U) \le L(W) - \inner{\nabla L(W),U} +  \|\u\|_\infty^2 ~.
\end{equation}
\end{proof}

Equipped with the smoothness property of $L$, we now turn to show that  if the greedy algorithm has not yet identified all the features of $W^\star$ then a single greedy iteration
yields a substantial progress.  We use the notation $\supp(W)$ to denote the indices of columns of $W$ which are not all-zeros.
\begin{lemma} \label{lma:key}
Let $F,\bF$ be two subsets of $[d]$ such that $\bF-F \neq \emptyset$ and let
$$
W = \argmin_{V: \supp(V)=F} L(V) ~~~~,~~~~
W^\star = \argmin_{V: \supp(V)=\bF} L(V)  ~~.
$$
Then,  if $L(W) > L(W^\star)$ we have
$$
L(W) - \min_{\u} L(W + \u\e_j^\trans) ~\geq~
\frac{\left(L(W)-L(W^\star) \right)^2}{4\,
\left(\sum_{i \in \bF-F}\|W^\star_{\cdot,i}\|_\infty\right)^2} ,
$$
where $j=  \argmax_{i} \|\nabla_i L(W)\|_1$.
\end{lemma}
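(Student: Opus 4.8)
The plan is to combine three ingredients: convexity of $L$, the first-order optimality of $W$ on its support $F$, and the smoothness bound of \lemref{lem:smooth}. This mirrors the fully-corrective greedy analysis of \cite{ShalevSrZh10}, adapted to groups of variables (columns).

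The first step is to turn the gap $L(W)-L(W^\star)$ into a lower bound on $\|\nabla_j L(W)\|_1$. Since $W$ minimizes $L$ over all matrices supported within $F$ and $L$ is differentiable, every column of $\nabla L(W)$ indexed by $F$ vanishes; in particular $\inner{\nabla L(W),W}=0$. Convexity of $L$ then gives
\[
L(W)-L(W^\star) \;\le\; \inner{\nabla L(W),\,W-W^\star} \;=\; -\sum_{i\in\bF-F}\inner{\nabla_i L(W),\,W^\star_{\cdot,i}},
\]
because $W^\star$ is supported on $\bF$ and the columns of $\nabla L(W)$ indexed by $F\cap\bF$ are zero. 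Applying H\"older's inequality column-wise, $|\inner{\nabla_i L(W),W^\star_{\cdot,i}}|\le\|\nabla_i L(W)\|_1\,\|W^\star_{\cdot,i}\|_\infty$, and then using that $j$ maximizes $\|\nabla_i L(W)\|_1$, we obtain
\[
\|\nabla_j L(W)\|_1 \;\ge\; \frac{L(W)-L(W^\star)}{\sum_{i\in\bF-F}\|W^\star_{\cdot,i}\|_\infty}.
\]

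The second step is to convert this gradient magnitude into a guaranteed decrease via \lemref{lem:smooth}: for $U=\u\,\e_j^\trans$ it gives $L(W+\u\e_j^\trans)\le L(W)+\inner{\nabla_j L(W),\u}+\|\u\|_\infty^2$. Choosing $\u=-\thalf\|\nabla_j L(W)\|_1\,\sgn(\nabla_j L(W))$ (entry-wise sign) makes the right-hand side equal to $L(W)-\tfrac14\|\nabla_j L(W)\|_1^2$, hence $\min_{\u}L(W+\u\e_j^\trans)\le L(W)-\tfrac14\|\nabla_j L(W)\|_1^2$. Substituting the lower bound from the first step yields exactly the claimed inequality.

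I do not expect a genuine obstacle here; the only point that needs care is the bookkeeping in the first display — it is precisely the optimality of $W$ on $F$ (which kills the gradient columns indexed by $F$) that lets the sum range over $\bF-F$ rather than all of $\bF$, and this is what makes the denominator match the statement. Note also that the hypothesis $L(W)>L(W^\star)$ already forces $\bF-F\neq\emptyset$, since $\bF\subseteq F$ would make $W^\star$ admissible in the minimization defining $W$, giving $L(W)\le L(W^\star)$.
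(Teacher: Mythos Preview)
Your proof is correct and follows essentially the same approach as the paper's: both combine the smoothness bound of \lemref{lem:smooth} (with the particular choice $\u=-\thalf\|\nabla_j L(W)\|_1\,\sgn(\nabla_j L(W))$) and the convexity/optimality argument that reduces $L(W)-L(W^\star)$ to $-\sum_{i\in\bF-F}\inner{\nabla_i L(W),W^\star_{\cdot,i}}$ via H\"older. The only cosmetic difference is the order of the two steps---the paper establishes the $\tfrac14\|\nabla_j L(W)\|_1^2$ decrease first and then bounds the gradient, whereas you do the reverse.
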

\begin{proof}
To simplify notation, denote $F^c = \bF-F$.
Using \lemref{lem:smooth}  we know that for any $\u$:
\[
L(W-\u \e_j^\trans) \le L(W) - \inner{\nabla L(W),\u \e_j^\trans} + \|\u\|_\infty^2 ~,
\]
In particular, the above holds for the vector of $\u = \tfrac{1}{2} \| \nabla_{j} L(W)\|_1 \, \sgn(\nabla_j L(W))$ and by rearranging we obtain that
\begin{align*}
L(W) - L(W-\u \e_j^\trans) &\ge \inner{\nabla L(W),\u \e_j^\trans} -  \|\u\|_\infty^2 \\
&= \tfrac{1}{4} \|\nabla_j L(W)\|_1^2 ~.
\end{align*}
It is therefore suffices to show that
\[
\frac{1}{4} \|\nabla_j L(W)\|_1^2 ~\ge~
\frac{\left(L(W)-L(W^\star) \right)^2}{4 \left(\sum_{i \in \bF-F}\|W^\star_{\cdot,i}\|_\infty\right)^2} ~.
\]
Denote $s = \sum_{j \in F^c} \|W^\star_{\cdot,j}\|_\infty$, then an equivalent inequality\footnote{This is indeed equivalent because the lemma assumes that $L(W) > L(W^\star)$ } is
\[
s\, \|\nabla_j L(W)\|_1 ~\ge~
L(W)-L(W^\star) ~.
\]
From the convexity of $L$, the right-hand side of the above is upper bounded by $\inner{\nabla L(W),W-W^\star}$. Hence, it is left to show that
\[
s\, \|\nabla_j L(W)\|_1 ~\ge~ \inner{\nabla L(W),W - W^\star} ~.
\]
Since we assume that $W$ is optimal over $F$ we get that
$\nabla_i L(W) = \mathbf{0}$ for all $i \in F$, hence $\inner{\nabla L(W),W} = 0$. Additionally,  $W^\star_{\cdot,i} = \mathbf{0}$ for $i \not\in \bF$. Therefore,
\begin{align*}
 \inner{\nabla L(W),W-W^\star } &= - \sum_{i \in F^c} \inner{\nabla_i L(W),W^\star_{\cdot,i} } \\
 &\le  \sum_{i \in F^c} \|\nabla_i L(W) \|_1 \, \|W^\star_{\cdot,i} \|_\infty \\
  &\le s \, \max_i  \|\nabla_i L(W) \|_1 \\
  &=  s \, \|\nabla_j L(W)\|_1  ~,
\end{align*}
and this concludes our proof.
\end{proof}

Using the above lemma, the proof of our main theorem easily follows.
\begin{proof}[of \thmref{thm:sparse}]
Denote $\epsilon_t = L(W^{(t)}) - L(W^\star)$, where $W^{(t)}$ is the value of $W$ at iteration $t$.
The definition of the update implies that $L(W^{(t+1)}) \leq
\min_{i,\u} \, L(W^{(t)} + \u \e_i^\trans)$.
The conditions of \lemref{lma:key} hold and therefore
we obtain that (with $F = F^{(t)}$)
{\small \begin{equation} \label{eqn:fully-onestep-strong}
\begin{split}
\epsilon_t - \epsilon_{t+1} &= L(W^{(t)}) - L(W^{(t+1)}) \geq~
\frac{\epsilon_t^2}{4\,
\left(\sum_{i \in \bF-F} \|W^\star_{\cdot,i}\|_\infty\right)^2} \\
&\geq~ \frac{\epsilon_t^2}{4\,\|W^\star\|_{\infty,1}^2} ~.
\end{split}
\end{equation}}
Using Lemma B.2 from \cite{ShalevSrZh10}, the above implies that
for $t \ge 4\,\|W^\star\|_{\infty,1}^2/\epsilon$ we have that $\epsilon_t \le \epsilon$, which concludes our proof.
\end{proof}

\end{document}